\newtheorem{theorem}{Theorem}
\begin{document}

%%%%%%%%% TITLE
\title{Robustness via curvature regularization, and vice versa}

\author{Seyed-Mohsen Moosavi-Dezfooli\thanks{The first two authors contributed equally to this work.}\;\thanks{\'Ecole Polytechnique F\'ed\'erale de Lausanne}
\\
{\tt\small seyed.moosavi@epfl.ch}\\\\
Jonathan Uesato\footnotemark[3]\\
{\tt\small juesato@google.com}
% For a paper whose authors are all at the same institution,
% omit the following lines up until the closing ``}''.
% Additional authors and addresses can be added with ``\and'',
% just like the second author.
% To save space, use either the email address or home page, not both
\and
Alhussein Fawzi\footnotemark[1]\;\thanks{DeepMind}\\
{\tt\small afawzi@google.com}\\\\
Pascal Frossard\footnotemark[2]\\
{\tt\small pascal.frossard@epfl.ch}
}
\maketitle
% \thispagestyle{empty}

%%%%%%%%% ABSTRACT
\begin{abstract}
State-of-the-art classifiers have been shown to be largely vulnerable to adversarial perturbations. One of the most effective strategies to improve robustness is adversarial training. In this paper, we investigate the effect of adversarial training on the geometry of the classification landscape and decision boundaries. We show in particular that  adversarial training leads to a significant decrease in the curvature of the loss surface with respect to inputs, leading to a drastically more  ``linear'' behaviour of the network. Using a locally quadratic approximation, we provide theoretical evidence on the existence of a strong relation between large robustness and small curvature. To further show the importance of reduced curvature for improving the robustness, we propose a new regularizer that directly minimizes curvature of the loss surface, and leads to adversarial robustness that is on par with adversarial training. Besides being a more efficient and principled alternative to adversarial training, the proposed regularizer confirms our claims on the importance of exhibiting quasi-linear behavior in the vicinity of data points in order to achieve robustness.
\end{abstract}

%%%%%%%%% BODY TEXT
\section{Introduction}

Adversarial training has recently been shown to be one of the most successful methods for increasing the robustness to adversarial perturbations of deep neural networks \cite{goodfellow2014, moosavi2015deepfool, madry2017towards}. This approach consists in training the classifier on \textit{perturbed} samples, with the aim of achieving higher robustness than a network trained on the original training set. Despite the importance and popularity of this training mechanism, the effect of adversarial training on the geometric properties of the classifier -- its loss landscape with respect to the input and decision boundaries -- is not well understood. In particular, how do the decision boundaries and loss landscapes of adversarially trained models compare to the ones trained on the original dataset?

In this paper, we analyze such properties and show that one of the main effects of adversarial training is to induce a significant \textit{decrease} in the curvature of the loss function and decision boundaries of the classifier. More than that, we show that such a geometric implication of adversarial training allows us to explain  the high robustness of adversarially trained models. To support this claim,
we follow a \textit{synthesis} approach, where a new regularization strategy, Curvature Regularization (CURE), encouraging small curvature is proposed and shown to achieve robustness levels that are comparable to that of adversarial training. This highlights the importance of small curvature for improved robustness. In more detail, our contributions are summarized as follows:
\begin{itemize}
    \item We empirically show that adversarial training induces a significant \textit{decrease in the curvature} of the decision boundary and loss landscape \textit{in the input space}.
    \vspace{-2mm}
    \item Using a quadratic approximation of the loss function, we establish upper and lower bounds on the robustness to adversarial perturbations with respect to the curvature of the loss.
    These bounds confirm the existence of a relation between low curvature and high robustness.
    \vspace{-2mm}
    \item Inspired by the implications of adversarially trained networks on the curvature of the loss function and our theoretical bounds, we propose an efficient regularizer that encourages small curvatures. On standard datasets (CIFAR-10 and SVHN), we show that the proposed regularizer leads to a significant boost of the robustness of neural networks, comparable to that of adversarial training.
\end{itemize}
The latter step shows that the proposed regularizer can be seen as a more efficient alternative to adversarial training. More importantly, it shows that the effect of adversarial training on the curvature reduction is not a mere by-product, but rather a driving effect that causes the robustness to increase. We stress here that the main focus of this paper is mainly on the latter -- analyzing the geometry of adversarial training -- rather than outperforming adversarial training.

\paragraph{Related works.} The large vulnerability of classifiers to adversarial perturbations has first been highlighted in \cite{biggio2013evasion, szegedy2013intriguing}. Many algorithms aiming to improve the robustness have since then been proposed \cite{goodfellow2014, shaham2015understanding, madry2017towards, cisse2017parseval, alemi2016deep}. In parallel, there has been a large body of work on designing improved attacks \cite{moosavi2015deepfool, madry2017towards}, which have highlighted that many of the proposed defenses obscure the model rather than make the model truly robust against all attacks \cite{uesato2018adversarial, athalye2018obfuscated}. One defense however stands out -- adversarial training -- which has shown to be empirically robust against all designed attacks. The goal of this paper is to provide an analysis of this phenomenon, and propose a regularization strategy (CURE), which mimics the effect of adversarial training. On the analysis front, many works have analyzed the existence of adversarial examples, and proposed several hypotheses for their existence \cite{fawzi2015analysis, gilmer2018adversarial, tanay2016boundary, fawzi2018nips, hein2017formal}. In \cite{goodfellow2014}, it is hypothesized that networks are not robust as they exhibit a ``too linear'' behavior. We show here that linearity of the loss function with respect to the inputs (that is, small curvature) is, on the contrary, beneficial for robustness: adversarial training does lead to much more linear loss functions in the vicinity of data points, and we verify that this linearity is indeed the source of increased robustness. We finally note that prior works have attempted to improve the robustness using gradient regularization \cite{gu2014towards, lyu2015unified, ross2017improving}. However, such methods have not been shown to yield significant robustness on complex datasets, or have not been subject to extensive robustness evaluation. Instead, our main focus here is to study the effect of the \textit{second-order} properties of the loss landscape, and show the existence of a strong connection with robustness to adversarial examples.

\section{Geometric analysis of adversarial training}
\label{sec:geometric_analysis_adv_training}
We start our analysis by inspecting the effect of adversarial training on the geometric properties of the decision boundaries of classifiers. To do so, we first compare qualitatively the decision boundaries of classifiers \textit{with} and \textit{without} adversarial training. Specifically, we examine the effect of \textit{adversarial fine-tuning}, which consists in fine-tuning a trained network with a few extra epochs on adversarial examples.\footnote{While adversarial fine-tuning is distinct from vanilla adversarial training, which consists in training on adversarial images \textit{from scratch}, we use an adversarially fine-tuned network in this paper as it allows to single out the effect of training on adversarial examples, as opposed to other uncontrolled phenomenon happening in the course of vanilla adversarial training.} We consider the CIFAR-10 \cite{krizhevsky2009learning} and SVHN \cite{netzer2011reading} datasets, and use a ResNet-18 \cite{he2015deep} architecture. For fine-tuning on adversarial examples,
we use DeepFool \cite{moosavi2015deepfool}.

Fig.~\ref{fig:normal_sections} illustrates normal cross-sections of the decision boundaries  before and after adversarial fine-tuning for classifiers trained on CIFAR-10 and SVHN datasets.
Specifically, the classification regions are shown in the plane spanned by $(r, v)$, where $r$ is the normal to the decision boundary and $v$ corresponds to a random direction.
\begin{figure}
    \centering
    \begin{subfigure}[b]{0.4\columnwidth}
        \includegraphics[width=0.8\linewidth, page=1]{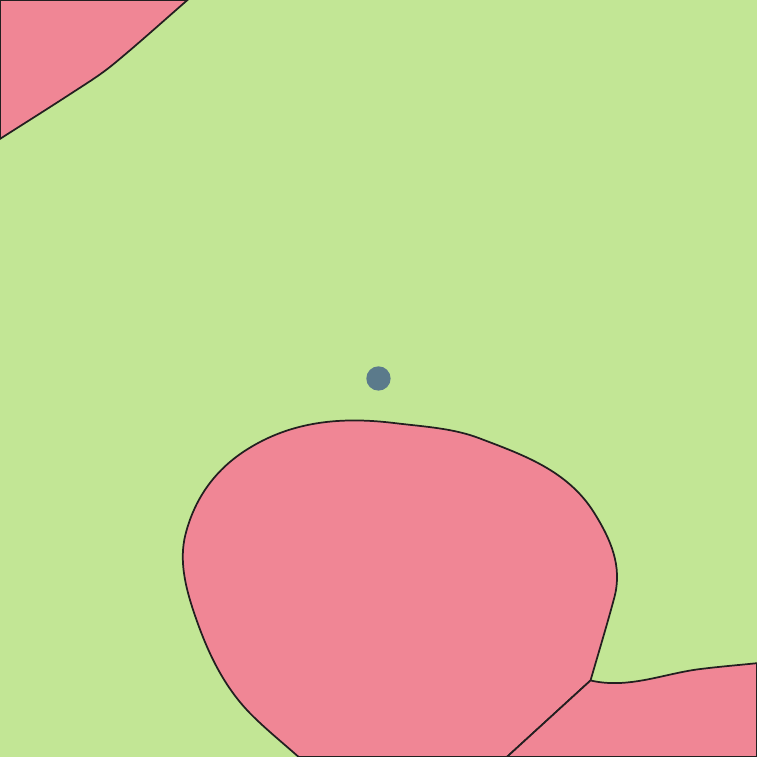}
        \caption{Original (CIFAR-10)}
    \end{subfigure}~
    \begin{subfigure}[b]{0.4\columnwidth}
        \includegraphics[width=0.8\linewidth, page=2]{cross_sections_CIFAR_SVHN.pdf}
        \caption{Finetuned (CIFAR-10)}
    \end{subfigure}
    \begin{subfigure}[b]{0.4\columnwidth}
        \includegraphics[width=0.8\linewidth, page=3]{cross_sections_CIFAR_SVHN.pdf}
        \caption{Original (SVHN)}
    \end{subfigure}~
    \begin{subfigure}[b]{0.4\columnwidth}
        \includegraphics[width=0.8\linewidth, page=4]{cross_sections_CIFAR_SVHN.pdf}
        \caption{Fine-tuned (SVHN)}
    \end{subfigure}
   \caption{\label{fig:normal_sections}Random normal cross-sections of the decision boundary for ResNet-18 classifiers trained on CIFAR-10 (first row) and SVHN (second row). The first column is for classifiers trained on the original dataset, and the second column shows the boundaries after adversarial fine-tuning on 20 epochs for CIFAR-10 and 10 epochs for SVHN. The green and red regions represent the correct class and incorrect classes, respectively. The point at the center shows the datapoint, while the lines represent the different decision boundaries (note that the red regions can include different incorrect classes).}
\end{figure}
In addition to inducing a larger distance between the data point and the decision boundary (hence resulting in a higher robustness), observe that the decision regions of fine-tuned networks are flatter and more regular. In particular, note that the curvature of the decision boundaries decreased after fine-tuning.

\begin{figure*}[ht]
    \begin{subfigure}[b]{0.5\textwidth}
        \centering
        \includegraphics[width=0.6\linewidth]{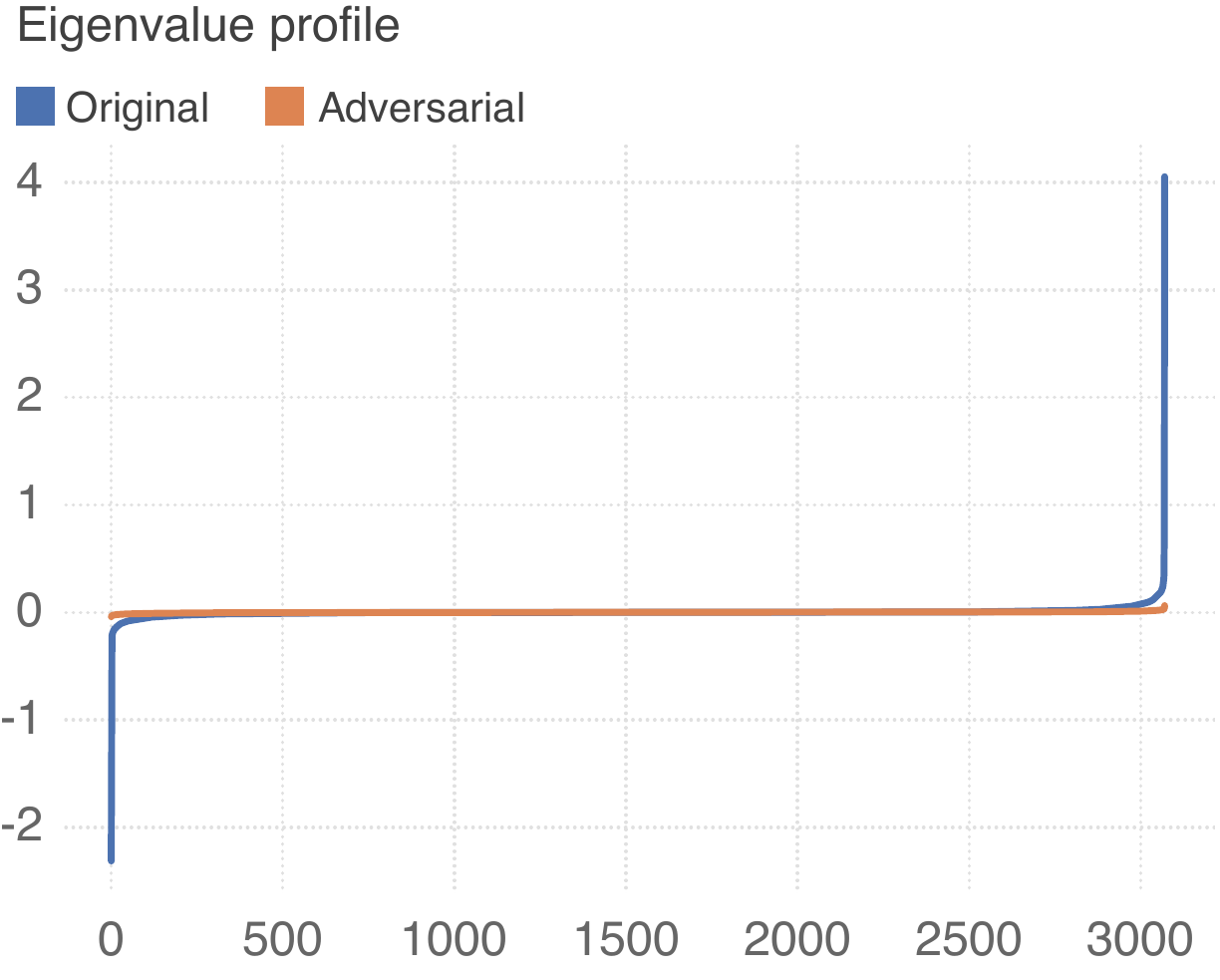}
        \caption{CIFAR-10}
    \end{subfigure}
    \begin{subfigure}[b]{0.5\textwidth}
        \centering
        \includegraphics[width=0.6\linewidth]{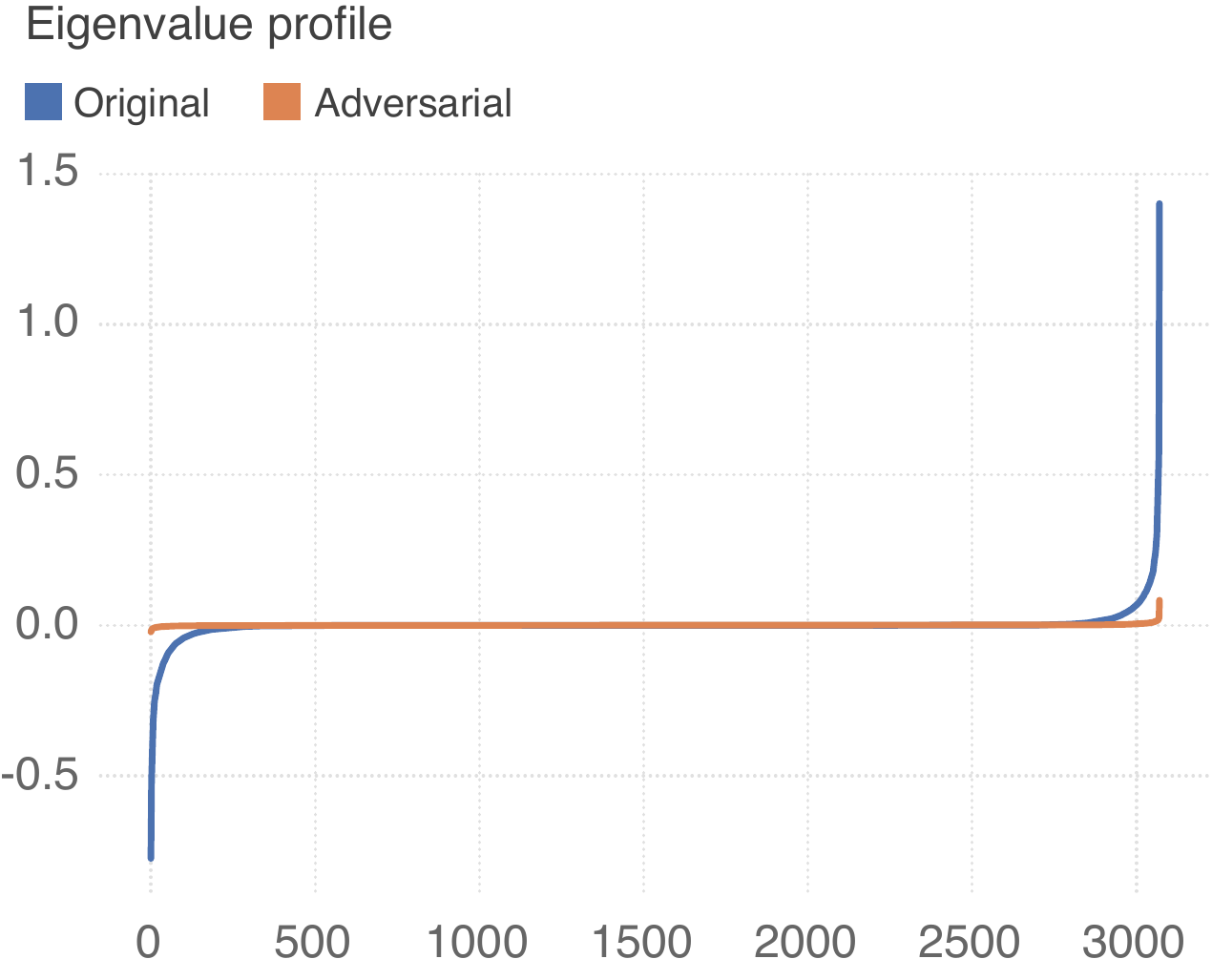}
        \caption{SVHN}
    \end{subfigure}
    \caption{Curvature profiles, which correspond to sorted eigenvalues of the Hessian, of the original and the adversarially fine-tuned networks. Note that the number of eigenvalues is equal to $32 \times 32 \times 3 = 3072$, which corresponds to the number of input dimensions. The ResNet-18 architecture is used.}
    \label{fig:curvature_profiles}
\end{figure*}

To further quantify this phenomenon, we now compute the \textit{curvature profile} of the loss function (with respect to the inputs) before and after adversarial fine-tuning. Formally, let $\ell$ denote the function that represents the loss of the network with respect to the inputs; e.g., in the case of cross-entropy, $\ell(x) = \text{XEnt} (f_{\theta} (x), y)$, where $y$ is the true label of image $x\in\mathbb{R}^d$, and $f_{\theta} (x)$ denotes the logits.\footnote{We omit the label $y$ from $\ell$ for simplicity, as the label can be understood from the context.}
% i.e., $\ell$ is a function from $\mathbb{R}^d \rightarrow \mathbb{R}$
The curvature profile corresponds to the set of eigenvalues of the Hessian matrix $$H = \left( \frac{\partial^2 \ell}{\partial x_i  \partial x_j} \right) \in \mathbb{R}^{d \times d}$$ where $x_i, i = 1, \dots, d$ denote the input pixels. We stress on the fact that the above Hessian is with respect to the inputs, and not the weights of the network. To compute these eigenvalues in practice, we note that Hessian vector products are given by the following for any $z$;
\begin{align}
H z =\frac{\nabla \ell(x+hz) - \nabla \ell(x)}{h} \text{ for } h \rightarrow 0.
\label{eq:finite_diff_analysis}
\end{align}
We then proceed to a finite difference approximation by choosing a finite $h$ in Eq.~(\ref{eq:finite_diff_analysis}). Besides being more efficient than generating the full Hessian matrix (which would be prohibitive for high-dimensional datasets), the finite difference approach has the benefit of measuring  \textit{larger-scale} variations of the gradient (where the scale is set using the parameter $h$) in the neighborhood of the datapoint, rather than an infinitesimal point-wise curvature.
This is crucial in the setting of adversarial classification, where we analyze the loss function in a small neighbourhood of data points, rather than the asymptotic regime $h \rightarrow 0$ which might capture very local (and not relevant) variations of the function.\footnote{For example, using ReLU non-linearities result in a piecewise linear neural network as a function of the inputs. This implies that the Hessian computed at the logits is exactly 0. This result is however very local; using the finite-difference approximation, we focus on larger-scale neighbourhoods.}

Intuitively, small eigenvalues (in absolute value) of $H$ indicate a small curvature of the graph of $\ell$ around $x$, hence implying that the classifier has a ``locally linear'' behaviour in the vicinity of $x$. In contrast, large eigenvalues (in absolute value) imply a high curvature of the loss function in the neighbourhood of image $x$. For example, in the case where the eigenvalues are exactly zero, the function becomes locally linear, hence leading to a flat decision surface.

We compute the curvature profile at $100$ random test samples, and show the average curvature in Fig.~\ref{fig:curvature_profiles} for CIFAR-10 and SVHN datasets.
Note that adversarial fine-tuning has led to a strong decrease in the curvature of the loss in the neighborhood of data points.
To further illustrate qualitatively this significant decrease in curvature due to adversarial training, Fig.~\ref{fig:loss_surfaces} shows the loss surface before and after adversarial training along normal and random directions $r$ and $v$. Observe that while the original network has large curvature in certain directions, the effect of adversarial training is to ``regularize'' the surface, resulting in a smoother, lower curvature (i.e., linear-like) loss.

\begin{figure*}
    \centering
    \begin{subfigure}[t]{0.2\textwidth}
        \includegraphics[width=\linewidth]{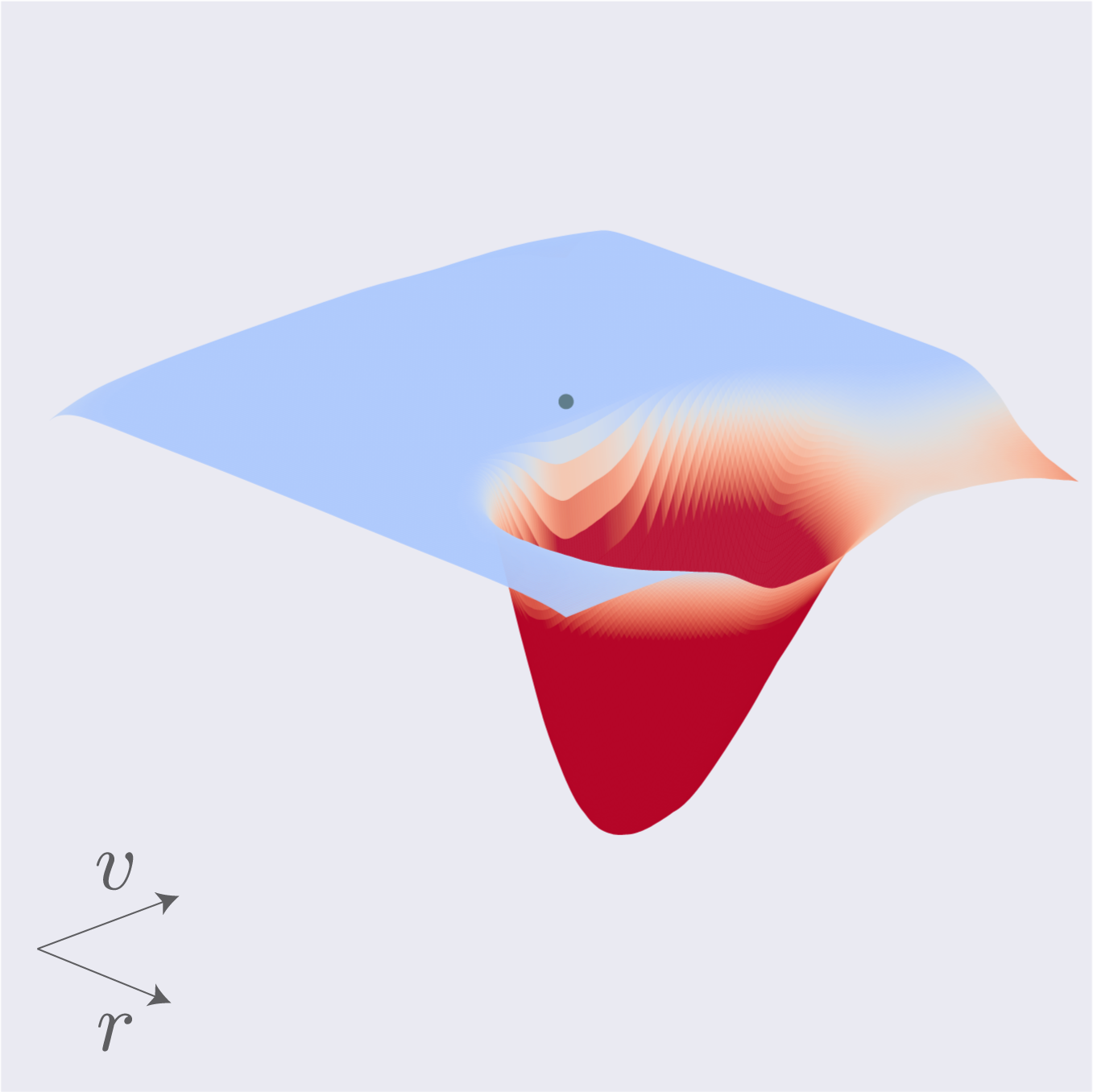}
        \caption{Original (CIFAR-10)}
    \end{subfigure}~
    \begin{subfigure}[t]{0.2\textwidth}
        \includegraphics[width=\linewidth]{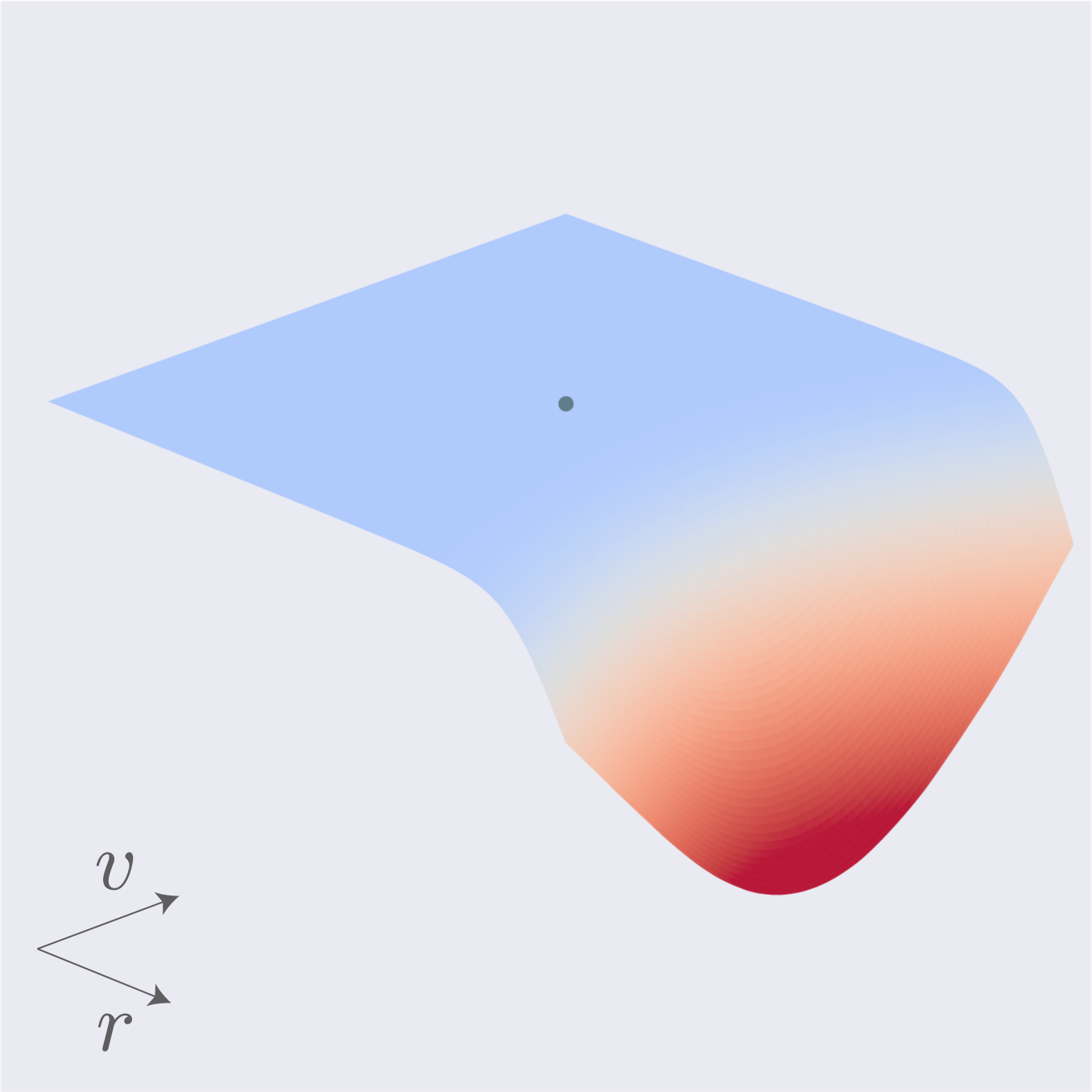}
        \caption{Fine-tuned (CIFAR-10)}
    \end{subfigure}
    ~~
    \begin{subfigure}[t]{0.2\textwidth}
        \includegraphics[width=\linewidth]{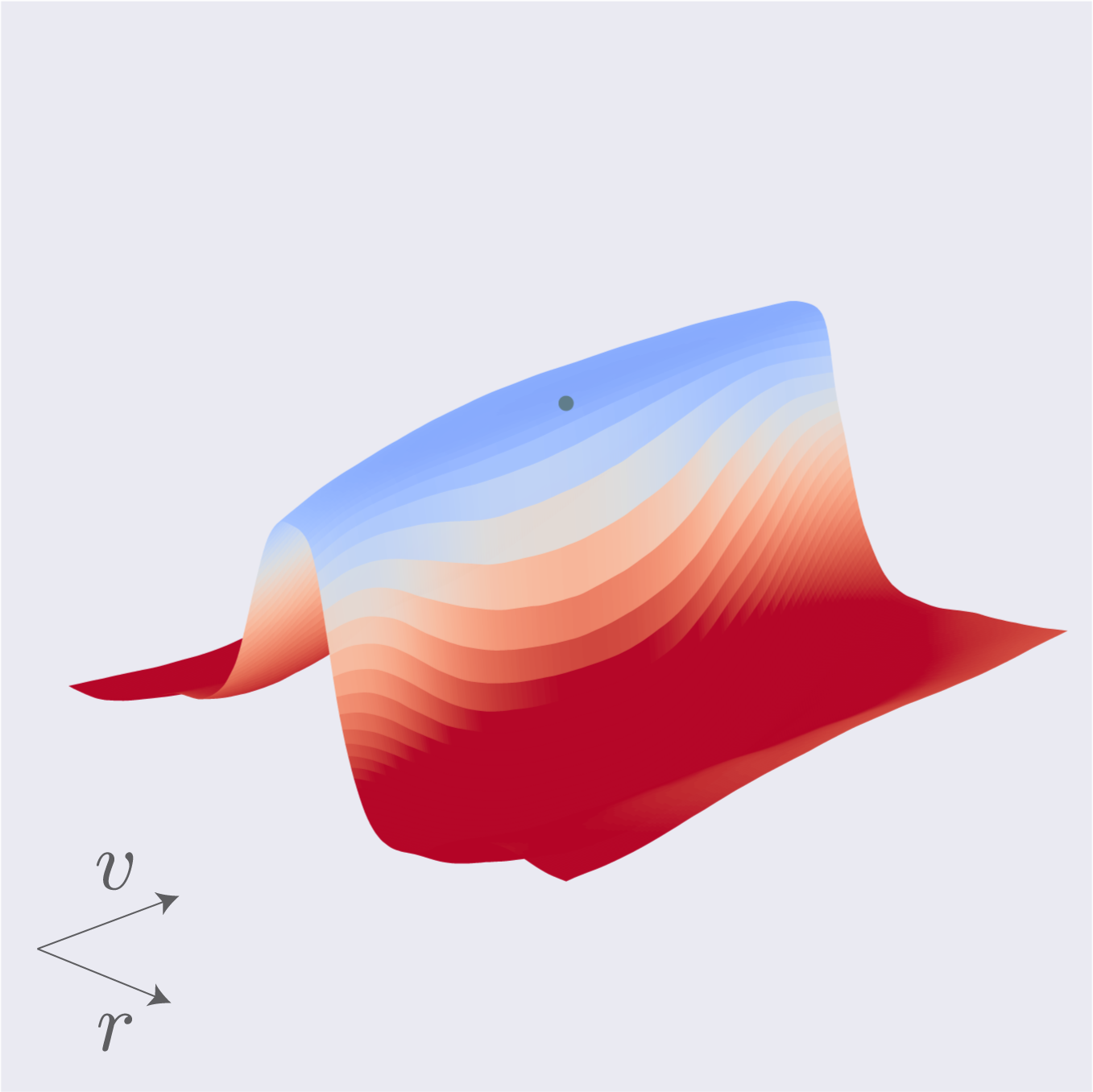}
        \caption{Original (SVHN)}
    \end{subfigure}~
    \begin{subfigure}[t]{0.2\textwidth}
        \includegraphics[width=\linewidth]{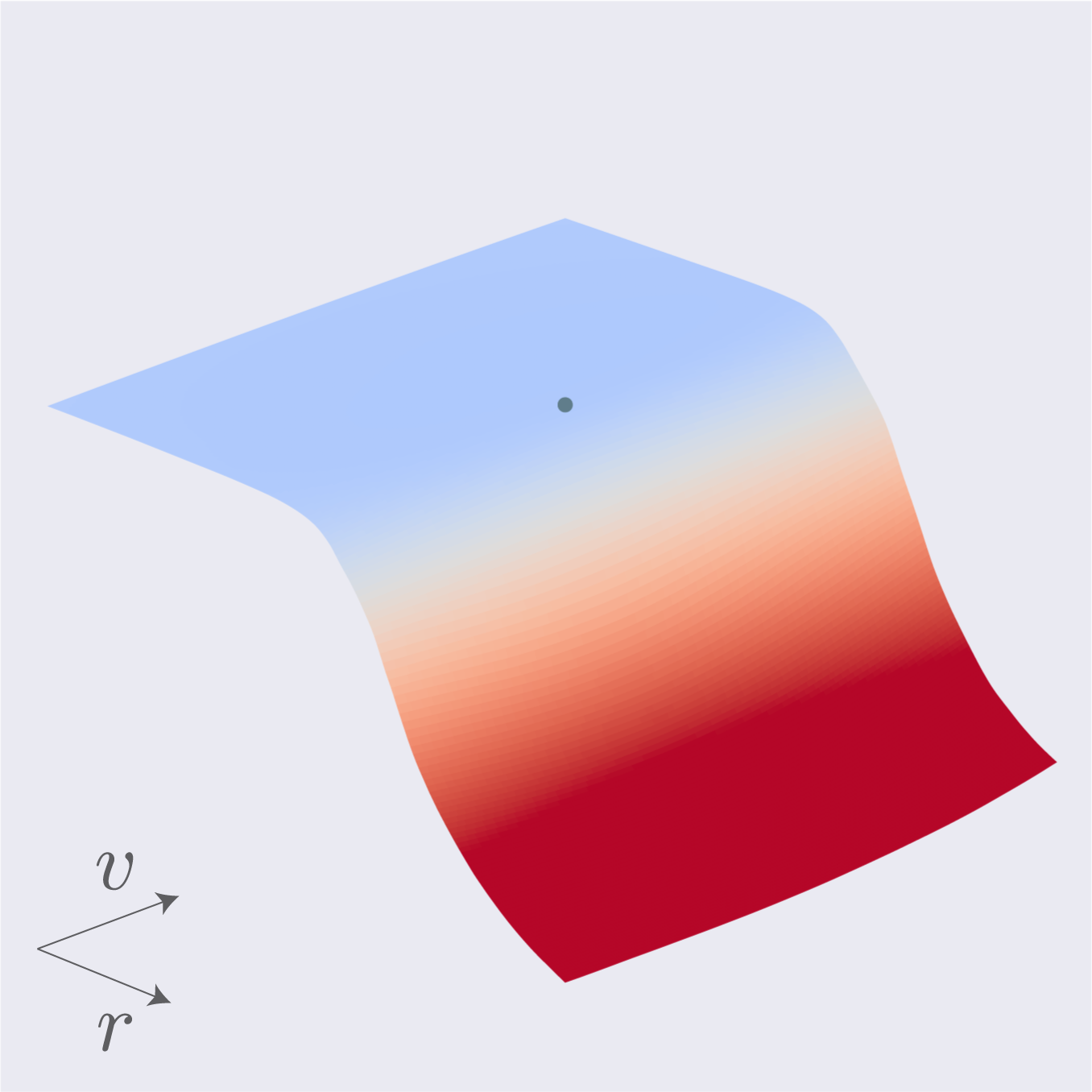}
        \caption{Fine-tuned (SVHN)}
    \end{subfigure}
    \caption{Illustration of the negative of the loss function; i.e., $-\ell(s)$ for points $s$ belonging to a plane spanned by a normal direction $r$ to the decision boundary, and random direction $v$. The original sample is illustrated with a blue dot. The light blue part of the surface corresponds to low loss (i.e., corresponding to the classification region of the sample), and the red part corresponds to the high loss (i.e., adversarial region).}
    \label{fig:loss_surfaces}
\end{figure*}

We finally note that this effect of adversarial training on the loss surface has the following somewhat paradoxical implication: while adversarially trained models are \textit{more robust} to adversarial perturbations (compared to original networks), they are also \textit{easier to fool}, in the sense that simple attacks are as effective as complex ones.
This is in stark contrast with original networks, where complex networks involving many gradient steps (e.g., PGD(20)) are much more effective than simple methods (e.g., FGSM). See Table~\ref{tab:fooling_rate_different_attacks}.
The comparatively small gap between the adversarial accuracies for different attacks on adversarially trained models is a direct consequence of the significant decrease of the curvature of the loss, thereby requiring a small number of gradient steps to find adversarial perturbations.

\begin{table}[]
    \centering
    \begin{tabular}{lcccc}
        \toprule
         & FGSM & $\ell_\infty$-DF &PGD(7) & PGD(20)\\
         \midrule
        Original & $38.0\%$ & $11.0\%$ &$0.5\%$ & $0.2\%$\\
        Fine-tuned & $61.0\%$ &  $57.5\%$ & $57.2\%$ & $56.9\%$\\
    \bottomrule
    \end{tabular}
    \caption{Adversarial accuracies for original and fine-tuned network on CIFAR-10, where adversarial examples are computed with different attacks; FGSM \cite{goodfellow2014}, DF \cite{moosavi2015deepfool} and PGD \cite{madry2017towards}. Perturbations are constrained to have $\ell_{\infty}$ norm smaller than  $\epsilon=4$ (images have pixel values in $[0, 255]$).}
    \label{tab:fooling_rate_different_attacks}
\end{table}

\section{Analysis of the influence of curvature on robustness}

While our results show that adversarial training leads to a decrease in the curvature of the loss, the relation between adversarial robustness and curvature of the loss  remains unclear.
To elucidate this relation, we
 consider a simple binary classification setting between class $1$ and $-1$. Recall that  $\ell(\cdot, 1)$ denotes the function that represents the loss of the network with respect to an input from class $1$. For example, in the setting where the log-loss is considered, we have $\ell(x, 1) = -\log(p(x))$, where $p(x)$ denotes the output of softmax corresponding to class $1$. In that setting, $x$ is classified as class $1$ iff $\ell(x, 1) \leq \log(2)$. For simplicity, we assume in our analysis that $x$ belongs to class $1$ without loss of generality, and hence omit the second argument in $\ell$ in the rest of this section.
We assume that the function $\ell$ can be locally well approximated using a quadratic function; that is, for ``sufficiently small'' $r$, we can write:
\[
\ell(x+r) \approx \ell(x) + \nabla \ell(x)^T r + \frac{1}{2} r^T H r,
\]
where $\nabla \ell(x)$ and $H$ denote respectively the gradient and Hessian of $\ell$ at $x$. Let $x$ be a point classified as class $1$; i.e., $\ell(x) \leq t$, where $t$ denotes the loss threshold (e.g., $t = \log(2)$ for the log loss). For this datapoint $x$, we then define $r^*$ to be the minimal perturbation in the $\ell_2$ sense\footnote{We use the $\ell_2$ norm for simplicity. Using the equivalence of norms in finite dimensional spaces, our result allows us to also bound the magnitude of $\ell_{\infty}$ adversarial perturbations.}, which fools the classifier assuming the quadratic approximation holds; that is,
\begin{align*}
r^* & := \arg\min_{r} \| r \| \text{ s.t. } \ell(x) + \nabla \ell(x)^T r + \frac{1}{2} r^T H r \geq t.
\end{align*}
In the following result, we provide upper and lower bounds on the magnitude of $r^*$ with respect to properties of the loss function at $x$.

\begin{figure}[t]
\centering
\includegraphics[width=0.7\columnwidth]{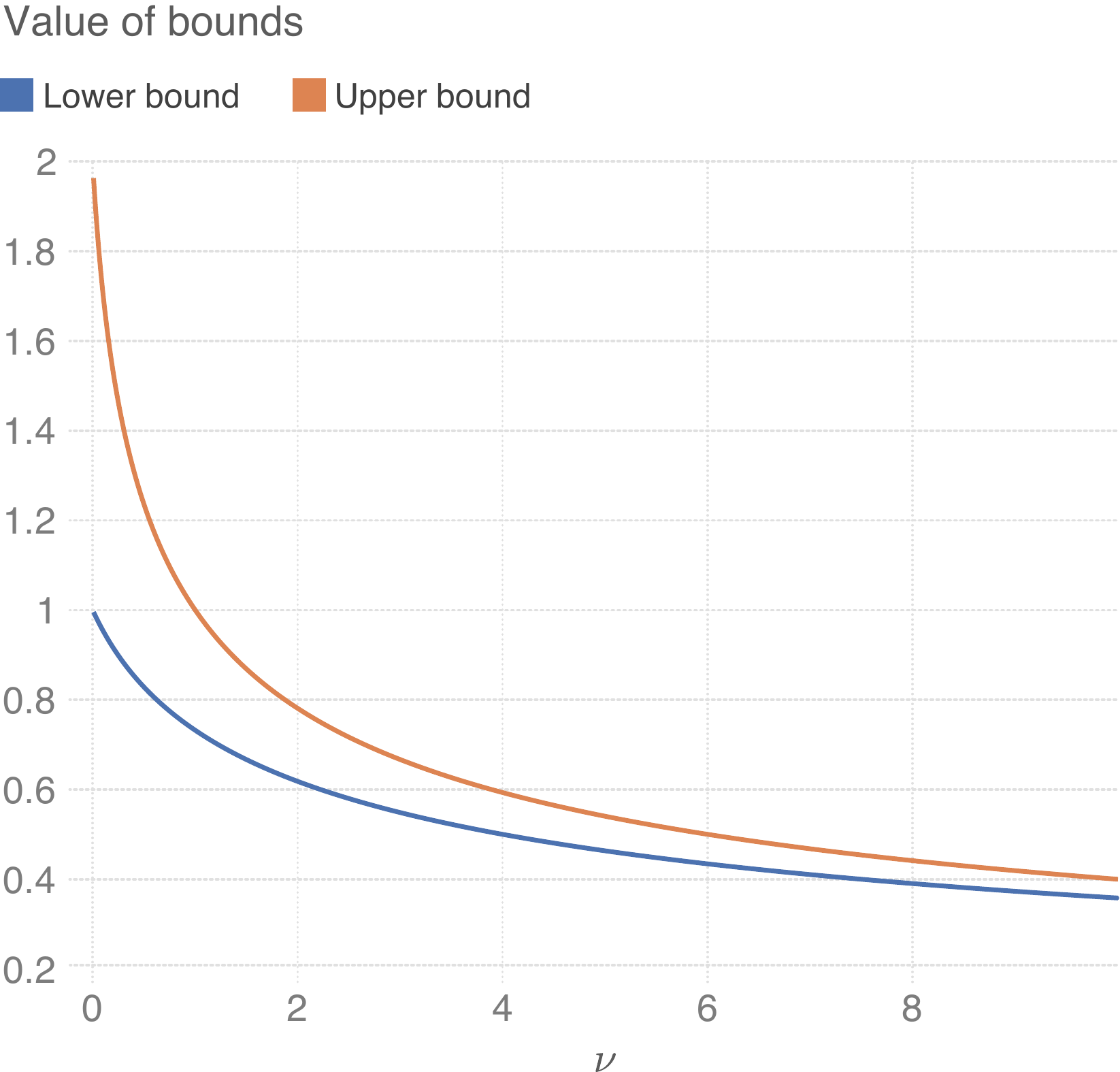}
\caption{\label{fig:dependence_nu_robustness}Illustration of upper and lower bounds in Eq.~(\ref{eq:lower_bound}) and (\ref{eq:upper_bound}) on the robustness with respect to curvature $\nu$. We have set $\| \nabla \ell(x) \| = 1, c = 1, \nabla \ell(x)^T v = 0.5$ in this example.}
\end{figure}

\label{sec:theory}
\begin{figure}
    \centering
    \includegraphics[width=0.8\columnwidth]{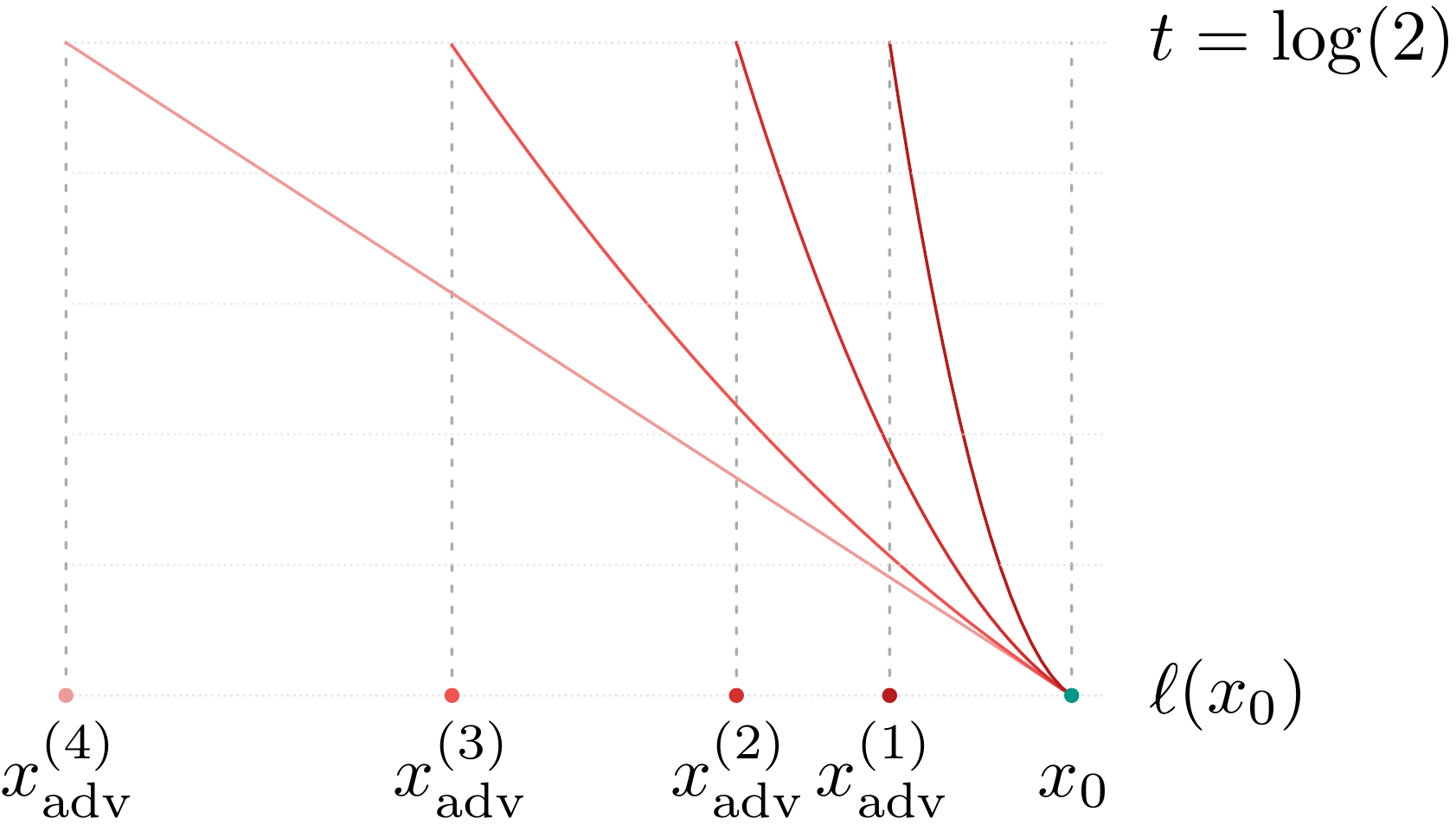}
    \caption{Geometric illustration in 1d of the effect of curvature on the adversarial robustness. Different loss functions (with varying curvatures) are illustrated at the vicinity of data point $x_0$, and $x_{\text{adv}}^{(i)}$ indicate the points at which such losses exceed $t$ (where $t$ is the misclassification threshold). All curves have the same loss and gradient at $x_0$. Note that increasing curvature leads to smaller adversarial examples (i.e., smaller $|x_0 - x_{\text{adv}}^{(i)}|$).}
    \label{fig:illust_curvature}
\end{figure}

\begin{theorem}
\label{thm:theorem_robustness}
Let $x$ be such that $c := t - \ell(x) \geq 0$, and let $g = \nabla \ell(x)$. Assume that $\nu := \lambda_{\max}(H) \geq 0$, and let $u$ be the eigenvector corresponding to $\nu$. Then, we have
\begin{align}
\frac{\| g \|}{\nu} \left( \sqrt{1 + \frac{2 \nu c}{ \| g \|^2}} - 1 \right) &\leq \| r^* \| \label{eq:lower_bound} \\
&\leq \frac{|g^T u|}{\nu} \left( \sqrt{1 + \frac{2 \nu  c}{(g^T u)^2}} - 1\right)
\label{eq:upper_bound}
\end{align}
The above bounds can further be simplified to:
\begin{align*}
\frac{c}{\| g \|} - 2 \nu \frac{c^2}{\| g \|^3} \leq \| r^* \| \leq \frac{c}{|g^T u|}
\end{align*}
\end{theorem}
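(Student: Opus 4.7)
}

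The plan is to treat the constrained problem defining $r^*$ as a quadratic inequality and to attack the two sides by opposite strategies: for the lower bound, I upper-bound the quadratic form $g^T r + \tfrac{1}{2} r^T H r$ over all $r$ of given norm (so that no such $r$ can satisfy the constraint unless $\|r\|$ is big enough); for the upper bound, I exhibit a single feasible $r$ of small norm.

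\textbf{Lower bound.} For any $r$, Cauchy--Schwarz gives $g^T r \leq \|g\|\,\|r\|$, and since $\nu = \lambda_{\max}(H)$ we have $r^T H r \leq \nu \|r\|^2$. Hence any feasible $r$ satisfies
\[
c \;\leq\; g^T r + \tfrac{1}{2} r^T H r \;\leq\; \|g\|\,\|r\| + \tfrac{\nu}{2}\|r\|^2.
\]
This is a quadratic inequality in $\rho := \|r\|$ of the form $\tfrac{\nu}{2}\rho^2 + \|g\|\rho - c \geq 0$; solving (using $\nu, c \geq 0$) forces $\rho$ to exceed the positive root, which is exactly the right-hand side of (\ref{eq:lower_bound}).

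\textbf{Upper bound.} I restrict to the one-dimensional subspace spanned by $u$: take the candidate $r = \alpha u$. Replacing $u$ by $-u$ if necessary (still an eigenvector for $\nu$), I may assume $g^T u \geq 0$. The constraint becomes $\alpha (g^T u) + \tfrac{\nu}{2}\alpha^2 \geq c$, and the smallest $\alpha \geq 0$ achieving equality is the positive root of the quadratic, namely
\[
\alpha = \frac{-g^T u + \sqrt{(g^T u)^2 + 2\nu c}}{\nu} = \frac{|g^T u|}{\nu}\!\left(\sqrt{1 + \tfrac{2\nu c}{(g^T u)^2}} - 1\right).
\]
Since $r = \alpha u$ is feasible with norm $\alpha$, the optimal $\|r^*\|$ is no larger, giving (\ref{eq:upper_bound}).

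\textbf{Simplified bounds.} These follow from elementary estimates on $\sqrt{1+x}$ for $x \geq 0$. For the lower bound, the inequality $\sqrt{1+x} - 1 \geq \tfrac{x}{2} - \tfrac{x^2}{2}$ (easily verified by squaring, or by comparing derivatives at $0$) applied to $x = 2\nu c/\|g\|^2$ and then multiplied by $\|g\|/\nu$ yields $\|r^*\| \geq c/\|g\| - 2\nu c^2/\|g\|^3$. For the upper bound, $\sqrt{1+x} - 1 \leq x/2$ (concavity of $\sqrt{\cdot}$) applied to $x = 2\nu c/(g^T u)^2$ and multiplied by $|g^T u|/\nu$ gives $\|r^*\| \leq c/|g^T u|$.

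There is no real obstacle here beyond bookkeeping: the quadratic approximation reduces the constrained-norm problem to a scalar quadratic in both directions, and the top eigenvector of $H$ is precisely the worst-case direction needed to tighten the upper bound. The only subtle point is fixing the sign of $u$ so that the smaller root of the candidate quadratic is the relevant one; everything else is routine algebra.
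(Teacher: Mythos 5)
Your proposal is correct and follows essentially the same route as the paper: bound the quadratic form via $g^T r \leq \|g\|\,\|r\|$ and $r^T H r \leq \nu \|r\|^2$ to get the scalar quadratic inequality for the lower bound, exhibit the feasible point $\alpha u$ along the top eigenvector for the upper bound, and apply $1+\tfrac{x}{2}-\tfrac{x^2}{2} \leq \sqrt{1+x} \leq 1+\tfrac{x}{2}$ for the simplified bounds. The only (cosmetic) difference is that you make the Cauchy--Schwarz step and the sign convention on $u$ explicit, which the paper leaves implicit.
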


\begin{proof}
\textbf{Lower bound.} Let $\alpha := \| r^* \|$. We note that $\alpha$ satisfies
\[
- c + \| g \| \alpha + \frac{\nu}{2} \alpha^2 \geq - c + g^T r^* + \frac{1}{2} (r^*)^T H r^* \geq 0.
\]
Solving the above second-order inequality, we get  $\alpha \geq \frac{\| g \|}{\nu} \left( \sqrt{1 + \frac{2 \nu c}{\| g \|^2}} - 1\right)$
or
$
\alpha \leq -\frac{\| g \|}{\nu} \left( \sqrt{1 + \frac{2 \nu c}{\| g \|^2}} + 1 \right).
$
However, since $\alpha \geq 0$, the first inequality holds, which precisely corresponds to the lower bound.

\noindent \textbf{Upper bound.} Let $\alpha \geq 0$. Define $r := \alpha u$, and let us find the minimal $|\alpha|$ such that
\[
- c + g^T r + \frac{1}{2} r^T H r = - c + \alpha g^T u + \frac{\alpha^2 \nu}{2} \geq 0.
\]
We note that the above inequality holds for any $|\alpha| \geq |\alpha_\text{min}|$, with
$ |\alpha_{\text{min}}| = \frac{|g^T u|}{\nu} \left( \sqrt{1 + \frac{2 \nu c}{(g^T u)^2}} - 1\right)$.
Hence, we have that $\| r^* \| \leq |\alpha_{\min}|$, which concludes the proof of the upper bound. The simplified bounds are proven using the inequality $1+\frac{x}{2}-\frac{x^2}{2} \leq \sqrt{1+x} \leq 1+\frac{x}{2}$.
\end{proof}

\textbf{Remark 1. \textit{Increasing} robustness with \textit{decreasing} curvature.} Note that upper and lower bounds on the robustness in Eq.~(\ref{eq:lower_bound}),~(\ref{eq:upper_bound})  \textit{decrease} with increasing curvature $\nu$. To see this, Fig. \ref{fig:dependence_nu_robustness} illustrates the dependence of the bounds on the curvature $\nu$. In other words, under the second order approximation, this shows that \textit{small curvature} (i.e., small eigenvalues of the Hessian) is beneficial to obtain classifiers with higher robustness (when the other parameters are kept fixed). This is in line with our observations from Section \ref{sec:geometric_analysis_adv_training}, where robust models are observed to have a smaller curvature than networks trained on original data. Fig.~\ref{fig:illust_curvature} provides intuition to the decreasing robustness with increasing curvature in a one-dimensional example.

\textbf{Remark 2. Dependence on the gradient.} In addition to the dependence on the curvature $\nu$, note that the upper and lower bounds depend on the gradient $\nabla \ell(x)$. In particular, these bounds  \textit{decrease} with the norm $\| \nabla \ell(x) \|$ (for a fixed direction).
Hence, under the second order approximation, this suggests that the robustness decreases with larger gradients.
However, as previously noted in \cite{uesato2018adversarial, athalye2018obfuscated}, imposing small gradients might provide a false sense of robustness.
That is, while having small gradients can make it hard for gradient-based methods to attack the network, the network can still be intrinsically vulnerable to small perturbations.

\textbf{Remark 3. Bound tightness.} Note that the upper and lower bounds match (and hence bounds are exact) when the gradient $\nabla \ell(x)$ is collinear to the largest eigenvector $u$. Interestingly, this condition seems to be approximately satisfied in practice, as the average normalized inner product $\frac{| \nabla \ell(x)^T u |}{\| \nabla \ell(x) \|_2}$ for CIFAR-10 is equal to $0.43$ before adversarial fine-tuning, and $0.90$ after fine-tuning (average over $1000$ test points). This inner product is significantly larger than the inner product between two typical vectors uniformly sampled from the sphere, which is approximately $\frac{1}{\sqrt{d}} \approx 0.02$. Hence, the gradient aligns well with the direction of largest curvature of the loss function in practice, which leads to approximately tight bounds.

\section{Improving robustness through curvature regularization}

\begin{table*}[ht]
    \centering
    \caption{Adversarial and clean accuracy for CIFAR-10 for original, regularized and adversarially trained models. Performance is reported for ResNet and WideResNet models, and the perturbations are computed using PGD(20). Perturbations are constrained to have $\ell_{\infty}$ norm less than $\epsilon = 8$ (where pixel values are in $[0, 255]$).}
    \begin{tabular}{lccccc}
        \toprule
        & \multicolumn{2}{c}{ResNet-18} && \multicolumn{2}{c}{WideResNet-28$\times$10}\\
        \cmidrule{2-3} \cmidrule{5-6}
         & Clean & Adversarial&& Clean& Adversarial\\
        \midrule
        Normal training & $94.9\%$ & $0.0\%$&& $94.6\%$ & $0.0\%$\\
        CURE & $81.2\%$ & $36.3\%$&& $83.1\%$ & $41.4\%$\\
        Adversarial training~\cite{madry2017towards} & $79.4\%$ & $43.7\%$ && $87.3\%$ & $45.8\%$\\
        \bottomrule
    \end{tabular}
    \label{tab:cifar_regularization_result}
\end{table*}

While adversarial training leads to a regularity of the loss in the vicinity of data points, it remains unclear whether this regularity is the \textit{main} effect of adversarial training, which confers robustness to the network, or it is rather a \textit{byproduct} of a more sophisticated phenomenon. To answer this question, we follow here a \textit{synthesis} approach, where we derive a regularizer which mimics the effect of adversarial training on the loss function -- encouraging small curvatures.
\paragraph{Curvature regularization (CURE) method.} Recall that $H$ denotes the Hessian of the loss $\ell$ at datapoint $x$. We denote by $\lambda_1, \dots, \lambda_d$ the eigenvalues of $H$. Our aim is to penalize large eigenvalues of $H$; we therefore consider a regularizer
$L_r = \sum_i p(\lambda_i),$
where $p$ is a non-negative function, which we set to be $p(t) = t^2$ to encourage all eigenvalues to be small. For this choice of $p$, $L_r$ corresponds to the Frobenius norm of the matrix $H$. We further note that
\[
L_r = \sum_i p(\lambda_i) = \text{trace}(p(H)) = \mathbb{E} (z^T p(H) z) = \mathbb{E} \| H z \|^2,
\]
where the expectation is taken over $z \sim \mathcal{N}(0, I_d)$. By using a finite difference approximation of the Hessian, we have
$Hz \approx \frac{\nabla \ell(x+hz) - \nabla \ell(x)}{h}$,
where $h$ denotes the discretization step, and controls the scale on which we require the variation of the gradients to be small.
Hence, $L_r$ becomes
\begin{align*}
L_r & = \frac{1}{h^2} \mathbb{E} \left\| \nabla \ell(x+hz) - \nabla \ell(x) \right\|^2.
\end{align*}
The above regularizer involves computing an expectation over $z \sim \mathcal{N}(0, I_d)$, and penalizes large curvatures along all directions equally. Rather than approximating the above with an empirical expectation of $\| H z \|^2$ over isotropic directions drawn from $\mathcal{N}(0, I_d)$, we instead \textit{select} directions which are known to lead to high curvature (e.g., \cite{jetley2018, fawzi_moosavi-dezfooli_frossard_soatto_2018}), and minimize the curvature along such chosen directions. The latter approach is more efficient, as the computation of each matrix-vector product $Hz$ involves one backward pass; focusing on high-curvature directions is therefore essential to minimize the overall curvature without having to go through each single direction in the input space. This selective approach is all the more adapted to the very sparse nature of curvature profiles we see in practice (see Fig. \ref{fig:curvature_profiles}), where only a few eigenvalues are large. This provides further motivation for identifying large curvature directions and penalizing the curvature along such directions.

Prior works in \cite{fawzi_moosavi-dezfooli_frossard_soatto_2018, jetley2018} have identified gradient directions as high curvature directions. In addition, empirical evidence reported in Section \ref{sec:theory} (Remark 3) shows a large inner product between the eigenvector corresponding to maximum eigenvalue and the gradient direction; this provides further indication that the gradient is pointing in high curvature directions, and is therefore a suitable candidate for $z$.
We set in practice $z = \frac{\text{sign} (\nabla \ell(x))}{\| \text{sign} (\nabla \ell(x)) \|}$, and finally consider the regualizer \footnote{The choice of $z \propto \nabla \ell(x)$ leads to almost identical results. We have chosen to set $z \propto \text{sign} (\nabla \ell(x))$, as we are testing the robustness of the classifier to $\ell_{\infty}$ perturbations. Hence, setting $z$ be the sign of the gradient is more relevant, as it constrains the $z$ direction to belong to the hypercube of interest.}
\begin{align*}
L_r = \| \nabla \ell(x+hz) - \nabla \ell(x ) \|^2, % \\ \text{ where } z = \text{sign} (\nabla \ell(x)).
\end{align*}
where the $\frac{1}{h^2}$ is absorbed by the regularization parameter. Our fine-tuning procedure then corresponds to minimizing the regularized loss function $\ell + \gamma L_r$ with respect to the weight parameters, where $\gamma$ controls the weight of the regularization relative to the loss term.

We stress that the proposed regularization approach significantly departs from adversarial training. In particular, while adversarial training consists in minimizing \textit{the loss on perturbed points} (which involves solving an optimization problem), our approach here consists in imposing regularity \textit{of the gradients} on a sufficiently small scale (i.e., determined by $h$). Previous works \cite{madry2017towards} have shown that adversarial training using a weak attack (such as FGSM \cite{goodfellow2014}, which involves a single gradient step) does \textit{not} improve the robustness. We show that our approach, which rather imposes gradient regularity (i.e., small curvature) along such directions, does lead to a significant improvement in the robustness of the network.

We use two pre-trained networks, ResNet-18~\cite{he2015deep} and WResNet-28x10~\cite{zagoruyko2016wide}, on the CIFAR-10 and SVHN datasets, where the pixel values are in $[0, 255]$.
For the optimization of the regularized objective, we use the Adam optimizer with a decreasing learning rate between [$10^{-4}$, $10^{-6}$] for a duration of $20$ epochs starting from a pre-trained network. We linearly increase the value of $h$ from $0$ to $1.5$ during the first $5$ epochs, and from there on, we use a fixed value of $h=1.5$. For $\gamma$, we set it to $4$ and $8$ for ResNet-18 and WResNet-28 respectively.
\begin{figure}
    \centering
    \includegraphics[width=0.7\columnwidth]{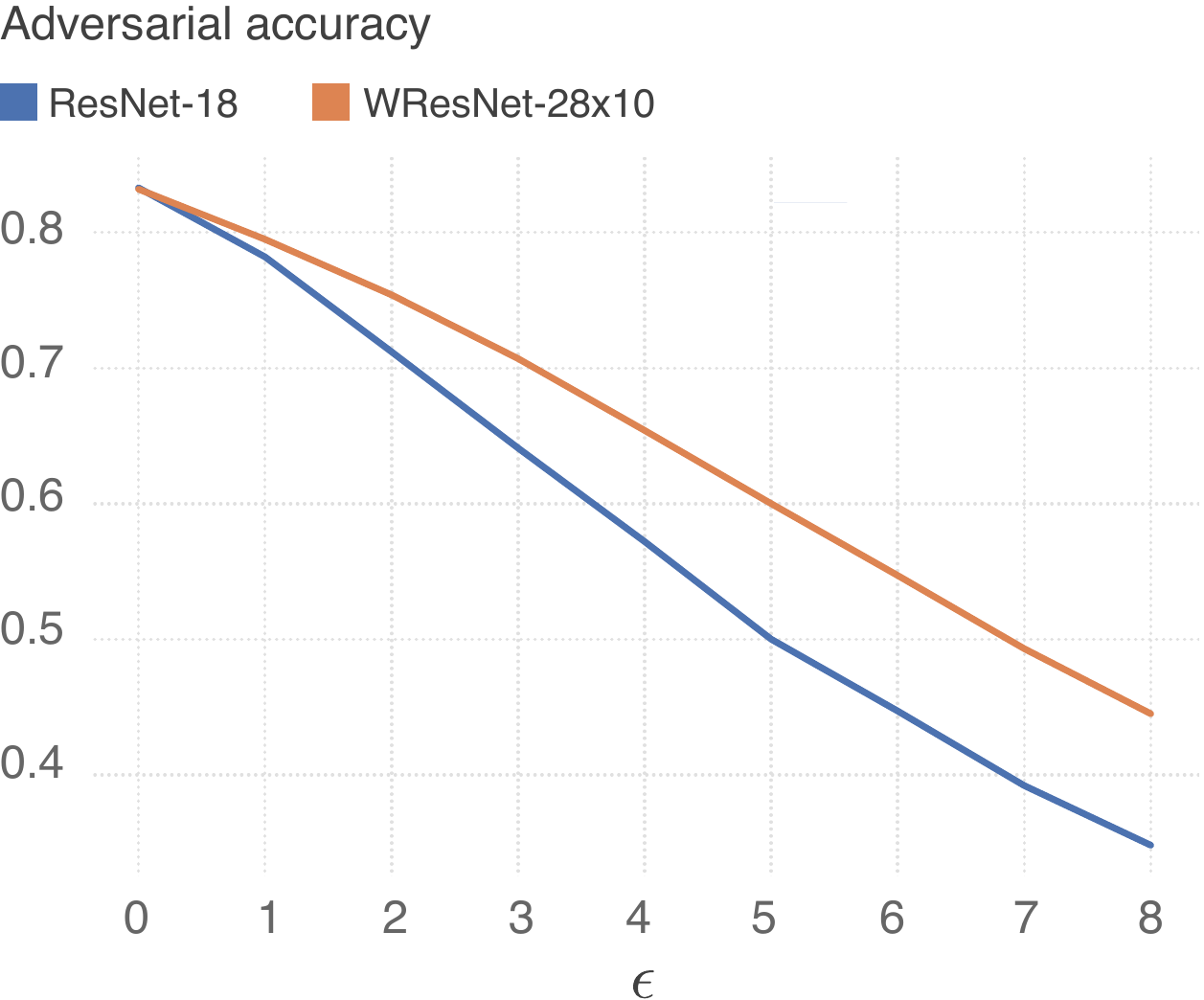}
    \caption{Adversarial accuracy versus perturbation magnitude $\epsilon$ computed using PGD(20), for ResNet-18 and WResNet-28x10 trained with CURE on CIFAR-10. See \cite{madry2017towards} for the curve corresponding to adversarial training. Curve generated for 2000 random test points.}
    \label{fig:cifar_robustness_vs_eps}
\end{figure}

\begin{figure}
    \centering
    \includegraphics[width=0.9\columnwidth]{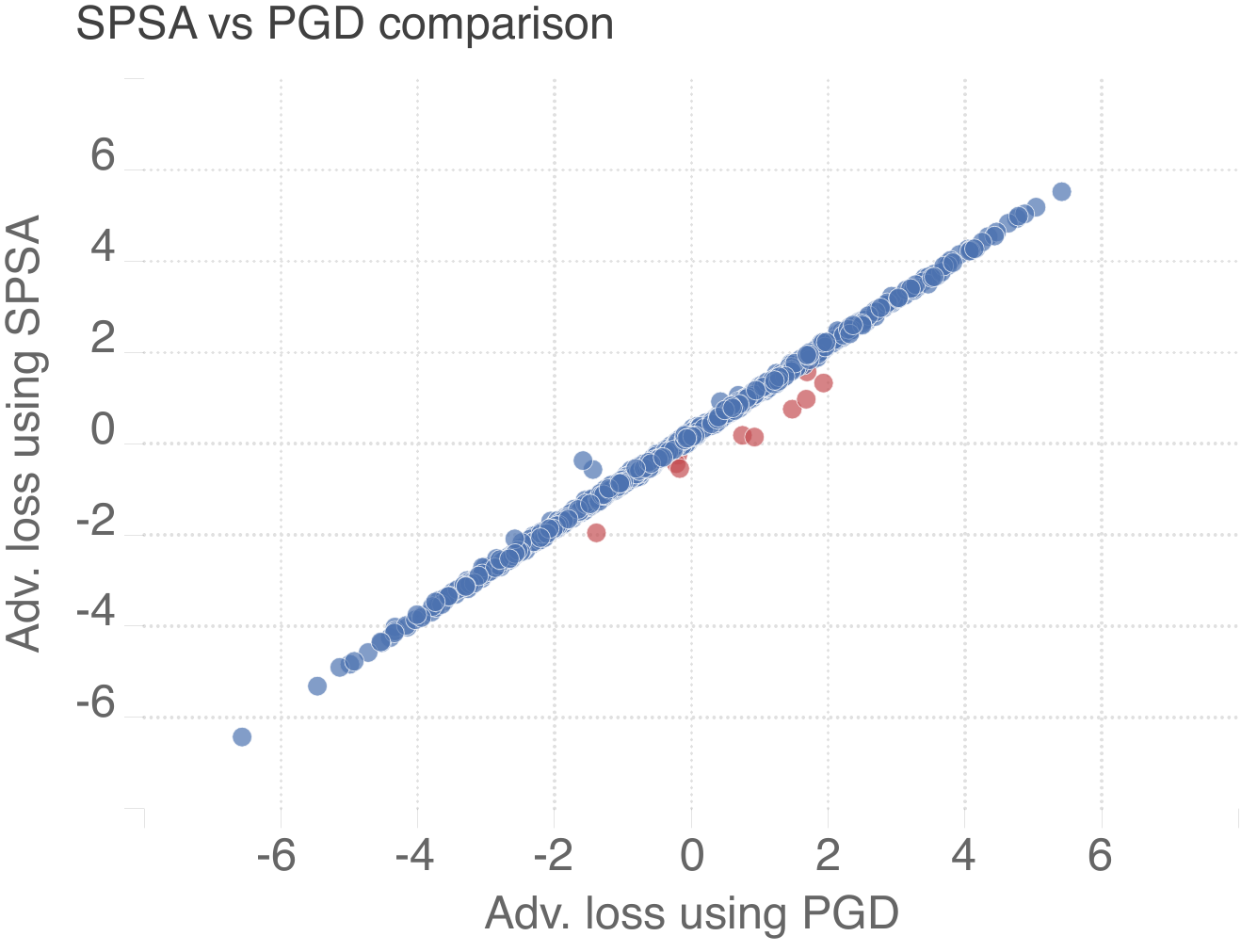}
    \caption{Analysis of gradient masking in a network trained with CURE. Adversarial loss computed with SPSA (y-axis) vs. adversarial loss with PGD(100) (x-axis) on a batch of 1000 datapoints. Adversarial loss corresponds to the difference of logits on true and adversarial class. Each point in the scatter plot corresponds to a single test sample. Negative loss indicates that the data point is misclassified. Points close to the line $y = x$ indicate that both attacks identified similar adversarial perturbations. Points below the line, shown in red, indicate points for which SPSA identified stronger adversarial perturbation than PGD. Note that overall, SPSA and PGD identified similarly perturbations.}
    \label{fig:spsa_pgd}
\end{figure}
\begin{figure}
    \centering
    \begin{subfigure}[t]{0.20\textwidth}
        \includegraphics[width=\linewidth]{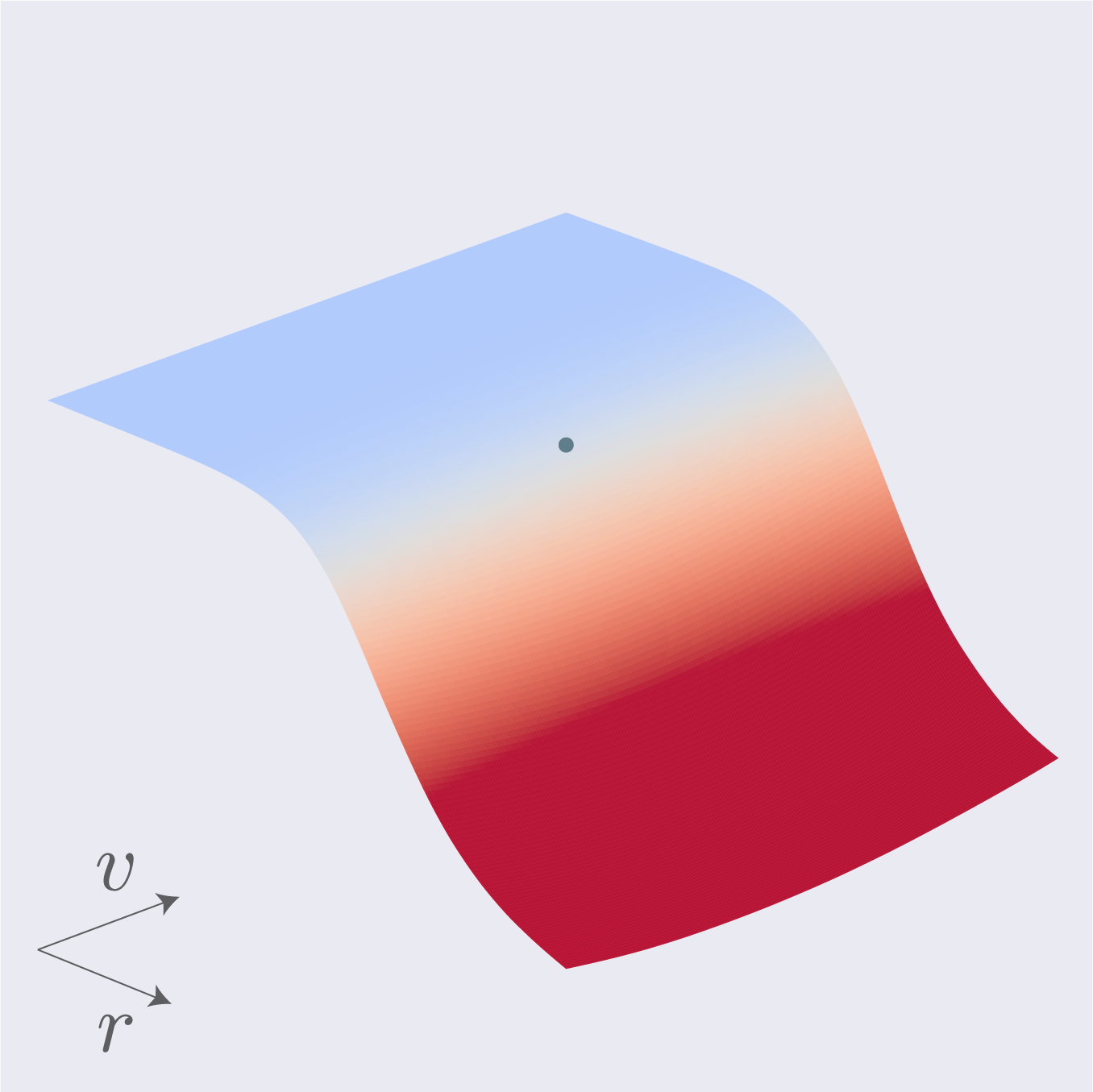}
        \caption{ResNet-18}
    \end{subfigure}~
    \begin{subfigure}[t]{0.20\textwidth}
        \includegraphics[width=\linewidth]{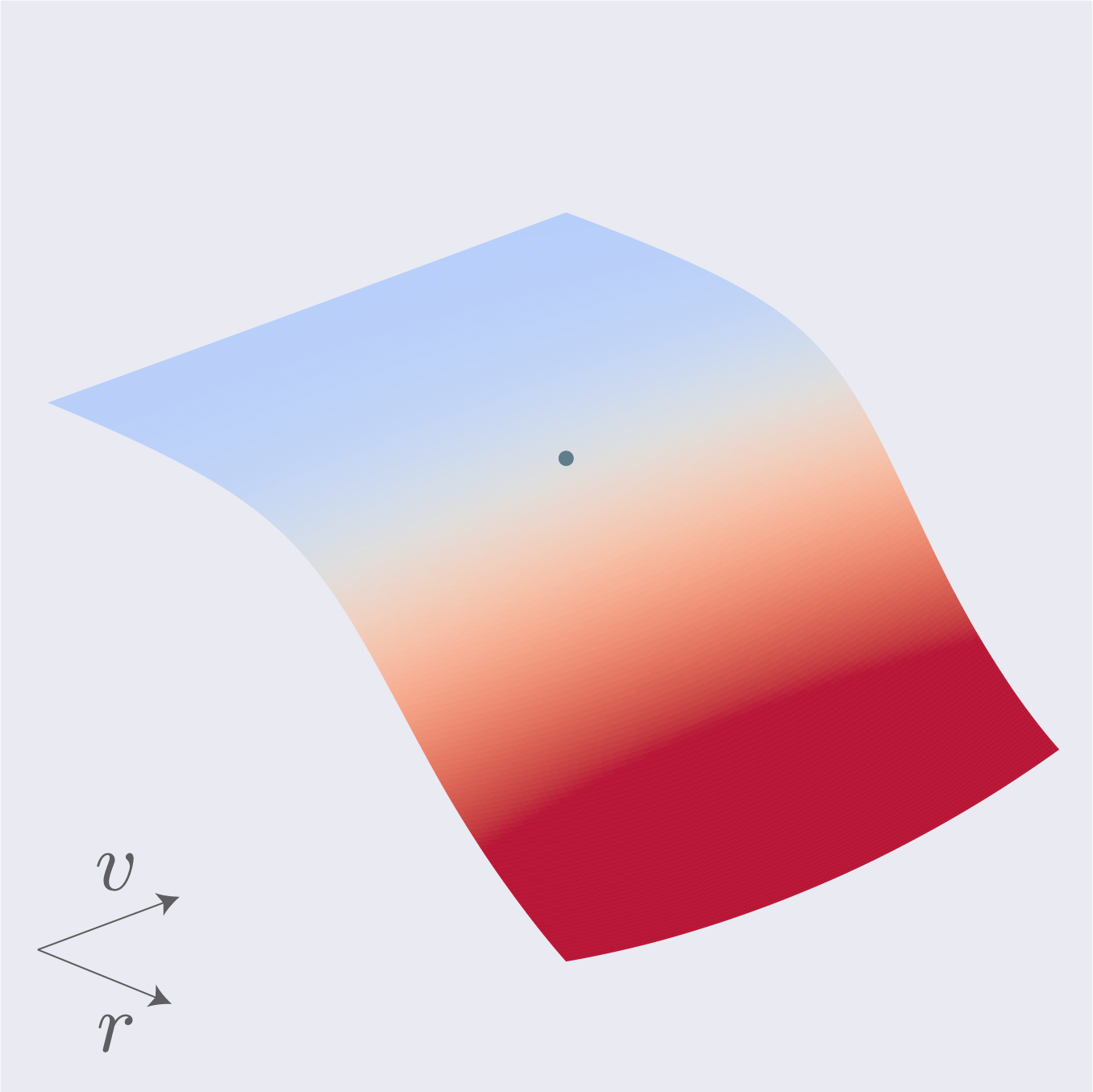}
        \caption{WideResNet-28}
    \end{subfigure}
    \caption{Similar plot to Fig. \ref{fig:loss_surfaces}, but where the loss surfaces of the network obtained with CURE are shown.}
    \label{fig:loss_surfaces_flattening}
\end{figure}
\begin{figure}
    \centering
    \includegraphics[width=0.8\linewidth]{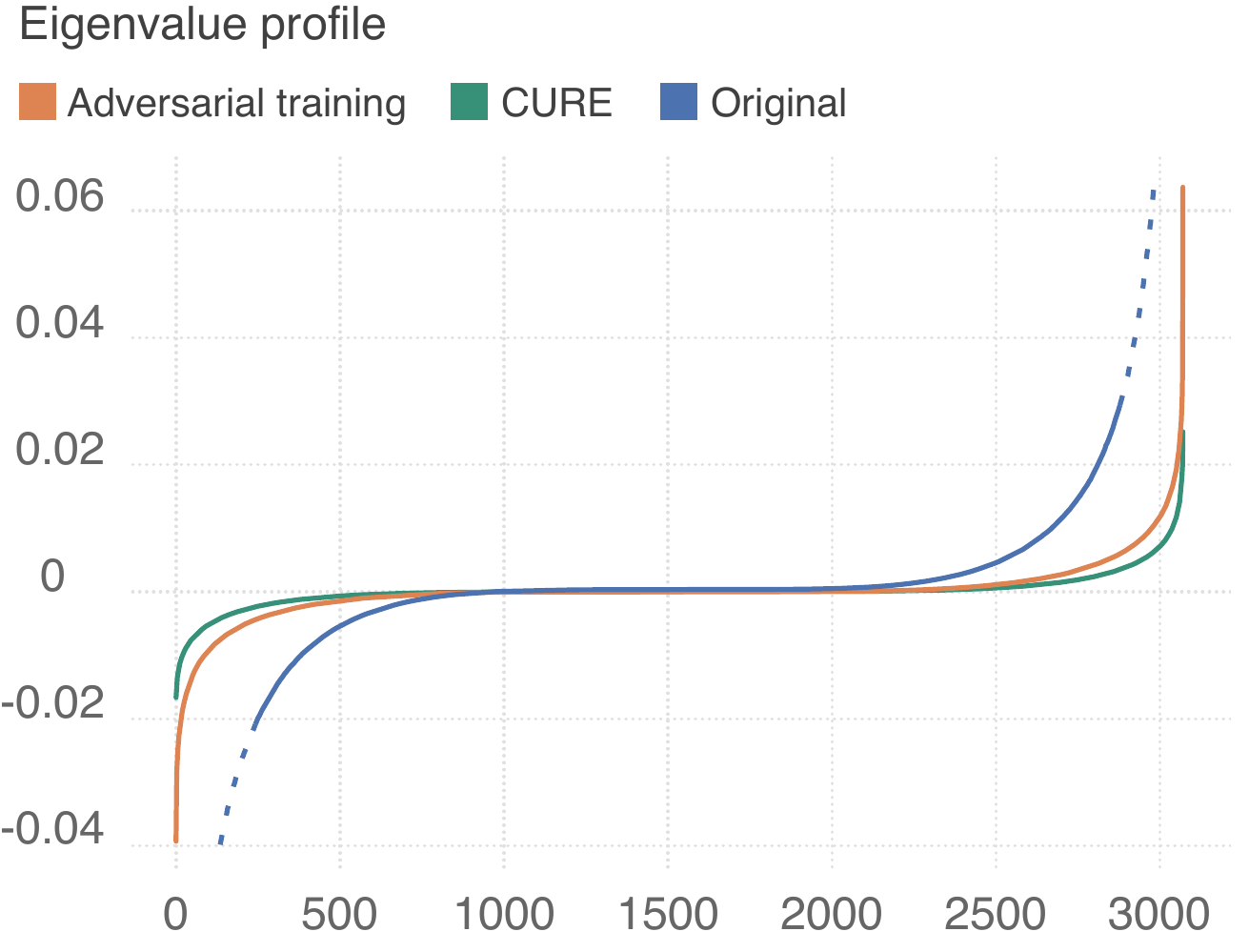}
    \caption{Curvature profile for a network fine-tuned using adversarial training and CURE. The ResNet-18 architecture on CIFAR-10 is used. For comparison, we also report the profile for the original network (same as Fig. \ref{fig:curvature_profiles}), where we clipped the values to fit in the $y$ range.}
    \label{fig:curvature_cure}
\end{figure}
\paragraph{Results.}
We evaluate the regularized networks with a strong PGD attack of 20 iterations, as it has been shown to outperform other adversarial attack algorithms~\cite{madry2017towards}. The adversarial accuracies of the regularized networks are reported in Table~\ref{tab:cifar_regularization_result} for CIFAR-10, and in the supp. material for SVHN. Moreover, the adversarial accuracy as a function of the perturbation magnitude $\epsilon$ is reported in Fig. \ref{fig:cifar_robustness_vs_eps}.

Observe that, while networks trained on the original dataset are not robust to perturbations as expected, performing 20 epochs of fine-tuning with the proposed regularizer leads to a significant boost in adversarial performance. In particular, the performance with the proposed regularizer is comparable to that of adversarial training reported in \cite{madry2017towards}. This result hence shows the importance of the curvature decrease phenomenon described in this paper in explaining the success of adversarial training.

In addition to verifying our claim that small curvature confers robustness to the network (and that it is the underlying effect in adversarial training), we note that the proposed regularizer has practical value, as it is efficient to compute and can therefore be used as an alternative to adversarial training. In fact, the proposed regularizer requires $2$ backward passes to compute, and is used in fine-tuning for $20$ epochs. In contrast, one needs to run adversarial training against a \textit{strong} adversary in order to reach good robustness \cite{madry2017towards}, and start the adversarial training procedure from scratch. We note that strong adversaries generally require around $10$ backward passes, making the proposed regularization scheme a more efficient alternative. We note however that the obtained results are slightly worse than adversarial training; we hypothesize that this might be either due to higher order effects in adversarial training not captured with our second order analysis or potentially due to a sub-optimal choice of hyper-parameters $\gamma$ and $h$.

\begin{figure*}
    \centering
    \includegraphics[width=0.7\linewidth]{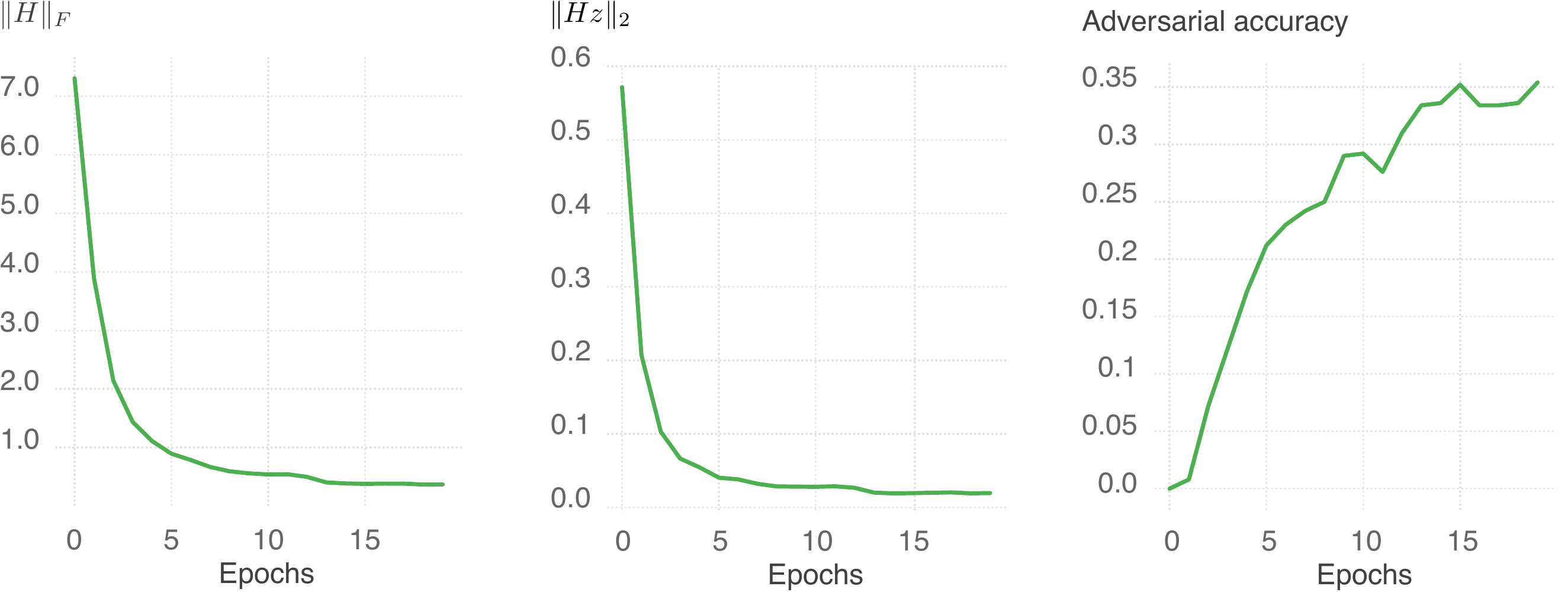}
    \caption{Evolution throughout the course of CURE fine-tuning for a ResNet-18 on CIFAR-10. The curves are averaged over $1000$ datapoints. \textbf{Left: } estimate of Frobenius norm , \textbf{Middle: } $\| H z \|$, where $z = \text{sign} (\nabla \ell(x))/\|\text{sign} (\nabla \ell(x))\|_2$ and \textbf{Right:} adversarial accuracy computed using PGD(20). The Frobenius norm is estimated with $\| H \|_F^2 = \mathbb{E}_{z \sim \mathcal{N}(0,I)} \| H z \|^2$, where the expectation is approximated with an empirical expectation over $100$ samples $z_i \sim \mathcal{N}(0, I)$.}
    \label{fig:evolution}
\end{figure*}
\paragraph{Stronger attacks and verifying the absence of gradient masking.} To provide further evidence on the robustness of the network fine-tuned with CURE, we attempt to find perturbations for the network with more complex attack algorithms. For the WideResNet-28x10, we obtain an addversarial accuracy of $41.1\%$ on the test set when using PGD(40) and PGD(100). % the adversarial accuracy on the test set with PGD(40) and PGD(100) are equal to 41.1\%,
This is only slightly worse than the result reported in  Table \ref{tab:cifar_regularization_result} with PGD(20). This shows that increasing the complexity of the attack does not lead to a significant decrease in the adversarial accuracy. Moreover, we evaluate the model against a gradient-free optimization method (SPSA), similar to the methodology used in  \cite{uesato2018adversarial}, and obtained an adversarial accuracy of 44.5\%. We compare moreover in Fig.~\ref{fig:spsa_pgd} the \textit{adversarial loss} (which represents the difference between the logit scores of the true and adversarial class) computed using SPSA and PGD for a batch of test data points. Observe that both methods lead to comparable adversarial loss (except on a few data points), hence further justifying that CURE truly improves the robustness, as opposed to masking or obfuscating gradients.
Hence, just like adversarial training which was shown empirically to lead to networks that are robust to all tested attacks in \cite{uesato2018adversarial, athalye2018obfuscated}, our experiments show that the regularized network has similar robustness properties.

\paragraph{Curvature and robustness.}
We now analyze the network obtained using CURE fine-tuning, and show that the obtained network has similar geometric properties to the adversarially trained one. Fig.~\ref{fig:loss_surfaces_flattening} shows the loss surface in a plane spanned by $(r, v)$, where $r$ and $v$ denote respectively a normal to the decision boundary and a random direction. Note that the loss surface obtained with CURE is qualitatively very similar to the one obtained with adversarial training (Fig.~ \ref{fig:loss_surfaces}), whereby the loss has a more linear behavior in the vicinity of the data point. Quantitatively, Fig.~\ref{fig:curvature_cure} compares the curvature profiles for the networks trained with CURE and adversarial fine-tuning. Observe that both profiles are very similar. We also report the evolution of the adversarial accuracy and curvature quantities in Fig.~\ref{fig:evolution} during fine-tuning with CURE. Note that throughout the fine-tuning process, the curvature decreases while the adversarial accuracy increases, which further shows the link between robustness and curvature.
Note also that, while we explicitly regularized for $\| H z \|$ (where $z$ is a fixed direction for each data point) as a proxy for $\| H \|_F$, the network does show that the intended target $\| H \|_F$ decreases in the course of training,
hence further suggesting that $\| H z \|$ acts as an efficient proxy of the global curvature.

\begin{figure}[t]
    \centering
    \includegraphics[width=\linewidth]{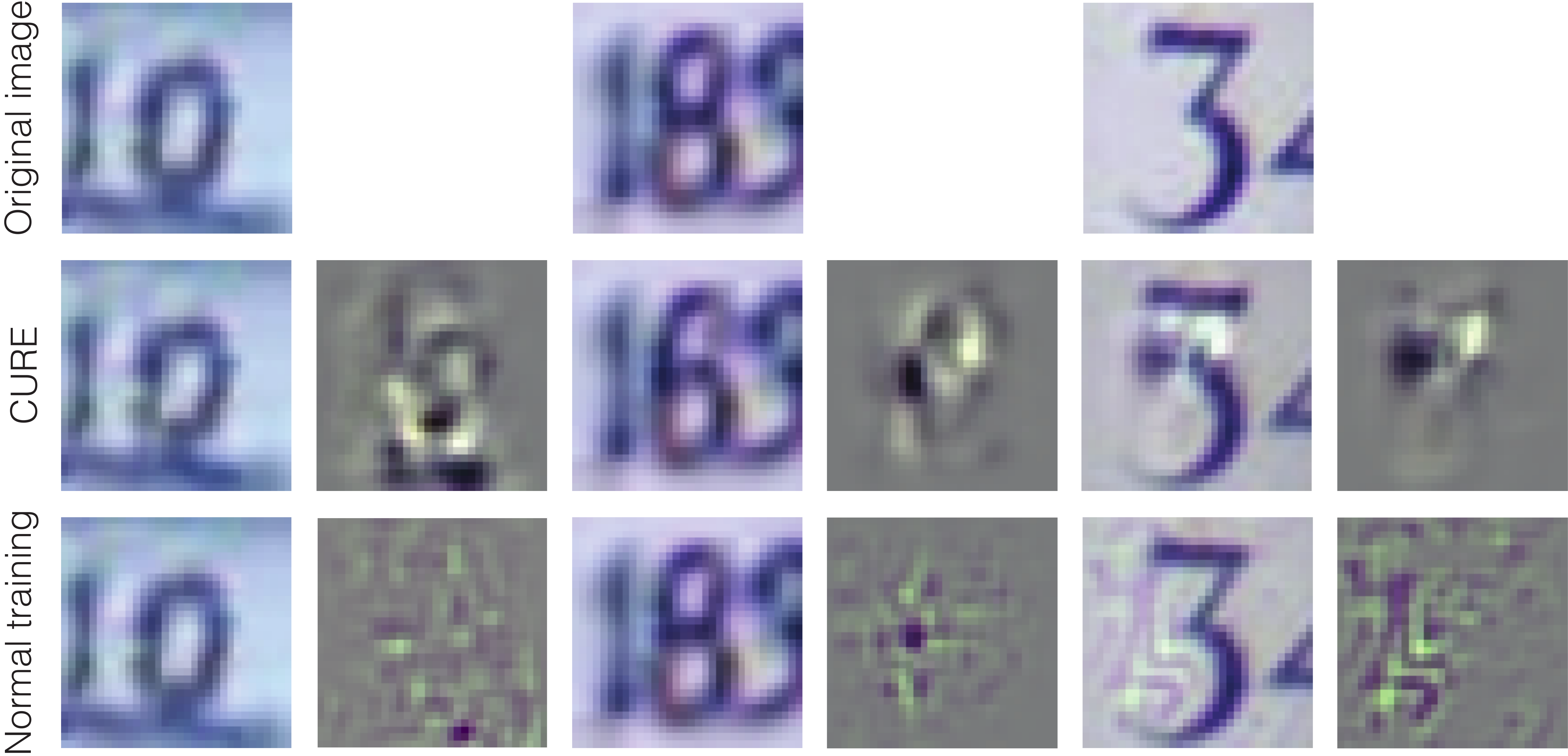}
    \caption{Visualizations of perturbed images and perturbations on SVHN for the ResNet-18 classifier.}
    \label{fig:illust_adv_perts}
\end{figure}
\paragraph{Qualitative evaluation of adversarial perturbations.} We finally illustrate some adversarial examples in Fig.~\ref{fig:illust_adv_perts} for networks trained on SVHN. Observe that the network trained with CURE exhibits visually meaningful adversarial examples, as perturbed images do resemble images from the adversary class. A similar observation for adversarially trained models has been made in \cite{tsipras2018robustness}.

\section{Conclusion}

Guided by the analysis of the geometry of adversarial training, we have provided empirical and theoretical evidence showing the existence of a strong correlation between small curvature and robustness. To validate our analysis, we proposed a new regularizer (CURE), which directly encourages small curvatures (in other words, promotes local linearity). This regularizer is shown to significantly improve the robustness of deep networks and even achieve performance that is comparable to adversarial training. In light of prior works attributing the vulnerability of classifiers to the ``linearity of deep networks'', this result is somewhat surprising, as it shows that one needs to decrease the curvature (and not increase it) to improve the robustness. In addition to validating the importance of controlling the curvature for improving the robustness, the proposed regularizer also provides an efficient alternative to adversarial training. In future work, we plan to leverage the proposed regularizer to train provably robust networks.
\section*{Acknowledgements}
A.F. would like to thank Neil Rabinowitz and Avraham Ruderman for the fruitful discussions. S.M and P.F would like to thank the Google Faculty Research Award, and the Hasler Foundation, Switzerland, in the framework of the ROBERT project.

{\small
\bibliographystyle{ieee}
\bibliography{egbib}

\begin{thebibliography}{10}\itemsep=-1pt

\bibitem{alemi2016deep}
A.~A. Alemi, I.~Fischer, J.~V. Dillon, and K.~Murphy.
\newblock Deep variational information bottleneck.
\newblock In {\em International Conference on Learning Representations (ICLR)},
  2017.

\bibitem{athalye2018obfuscated}
A.~Athalye, N.~Carlini, and D.~Wagner.
\newblock Obfuscated gradients give a false sense of security: Circumventing
  defenses to adversarial examples.
\newblock In {\em International Conference on Machine Learning (ICML)}, 2018.

\bibitem{biggio2013evasion}
B.~Biggio, I.~Corona, D.~Maiorca, B.~Nelson, N.~{\v{S}}rndi{\'c}, P.~Laskov,
  G.~Giacinto, and F.~Roli.
\newblock Evasion attacks against machine learning at test time.
\newblock In {\em Joint European Conference on Machine Learning and Knowledge
  Discovery in Databases}, pages 387--402, 2013.

\bibitem{buckman2018thermometer}
J.~Buckman, A.~Roy, C.~Raffel, and I.~Goodfellow.
\newblock Thermometer encoding: One hot way to resist adversarial examples.
\newblock In {\em International Conference on Learning Representations (ICLR)},
  2018.

\bibitem{cisse2017parseval}
M.~Cisse, P.~Bojanowski, E.~Grave, Y.~Dauphin, and N.~Usunier.
\newblock Parseval networks: Improving robustness to adversarial examples.
\newblock In {\em International Conference on Machine Learning (ICML)}, 2017.

\bibitem{fawzi2018nips}
A.~Fawzi, H.~Fawzi, and O.~Fawzi.
\newblock Adversarial vulnerability for any classifier.
\newblock In {\em Neural Information Processing Systems (NIPS)}, 2018.

\bibitem{fawzi2015analysis}
A.~Fawzi, O.~Fawzi, and P.~Frossard.
\newblock Analysis of classifiers' robustness to adversarial perturbations.
\newblock {\em arXiv preprint arXiv:1502.02590}, 2015.

\bibitem{fawzi_moosavi-dezfooli_frossard_soatto_2018}
A.~Fawzi, S.-M. Moosavi-Dezfooli, P.~Frossard, and S.~Soatto.
\newblock Empirical study of the topology and geometry of deep networks.
\newblock In {\em IEEE Conference on Computer Vision and Pattern Recognition
  (CVPR)}, 2018.

\bibitem{gilmer2018adversarial}
J.~Gilmer, L.~Metz, F.~Faghri, S.~S. Schoenholz, M.~Raghu, M.~Wattenberg, and
  I.~Goodfellow.
\newblock Adversarial spheres.
\newblock In {\em International Conference on Learning Representations (ICLR)},
  2018.

\bibitem{goodfellow2014}
I.~J. Goodfellow, J.~Shlens, and C.~Szegedy.
\newblock Explaining and harnessing adversarial examples.
\newblock In {\em International Conference on Learning Representations (ICLR)},
  2015.

\bibitem{gu2014towards}
S.~Gu and L.~Rigazio.
\newblock Towards deep neural network architectures robust to adversarial
  examples.
\newblock {\em arXiv preprint arXiv:1412.5068}, 2014.

\bibitem{he2015deep}
K.~He, X.~Zhang, S.~Ren, and J.~Sun.
\newblock Deep residual learning for image recognition.
\newblock In {\em {IEEE} Conference on Computer Vision and Pattern Recognition
  (CVPR)}, 2016.

\bibitem{hein2017formal}
M.~Hein and M.~Andriushchenko.
\newblock Formal guarantees on the robustness of a classifier against
  adversarial manipulation.
\newblock In {\em Advances in Neural Information Processing Systems}, pages
  2263--2273, 2017.

\bibitem{jetley2018}
S.~Jetley, N.~Lord, and P.~Torr.
\newblock With friends like these, who needs adversaries?
\newblock In {\em Neural Information Processing Systems (NIPS)}, 2018.

\bibitem{krizhevsky2009learning}
A.~Krizhevsky and G.~Hinton.
\newblock Learning multiple layers of features from tiny images.
\newblock {\em Master's thesis, Department of Computer Science, University of
  Toronto}, 2009.

\bibitem{lyu2015unified}
C.~Lyu, K.~Huang, and H.-N. Liang.
\newblock A unified gradient regularization family for adversarial examples.
\newblock In {\em IEEE International Conference on Data Mining (ICDM)}, 2015.

\bibitem{madry2017towards}
A.~Madry, A.~Makelov, L.~Schmidt, D.~Tsipras, and A.~Vladu.
\newblock Towards deep learning models resistant to adversarial attacks.
\newblock In {\em International Conference on Learning Representations (ICLR)},
  2018.

\bibitem{moosavi2015deepfool}
S.-M. Moosavi-Dezfooli, A.~Fawzi, and P.~Frossard.
\newblock Deepfool: a simple and accurate method to fool deep neural networks.
\newblock In {\em IEEE Conference on Computer Vision and Pattern Recognition
  (CVPR)}, 2016.

\bibitem{netzer2011reading}
Y.~Netzer, T.~Wang, A.~Coates, A.~Bissacco, B.~Wu, and A.~Y. Ng.
\newblock Reading digits in natural images with unsupervised feature learning.
\newblock In {\em NIPS workshop on deep learning and unsupervised feature
  learning}, 2011.

\bibitem{ross2017improving}
A.~S. Ross and F.~Doshi-Velez.
\newblock Improving the adversarial robustness and interpretability of deep
  neural networks by regularizing their input gradients.
\newblock In {\em AAAI}, 2018.

\bibitem{shaham2015understanding}
U.~Shaham, Y.~Yamada, and S.~Negahban.
\newblock Understanding adversarial training: Increasing local stability of
  neural nets through robust optimization.
\newblock {\em arXiv preprint arXiv:1511.05432}, 2015.

\bibitem{szegedy2013intriguing}
C.~Szegedy, W.~Zaremba, I.~Sutskever, J.~Bruna, D.~Erhan, I.~Goodfellow, and
  R.~Fergus.
\newblock Intriguing properties of neural networks.
\newblock In {\em International Conference on Learning Representations (ICLR)},
  2014.

\bibitem{tanay2016boundary}
T.~Tanay and L.~Griffin.
\newblock A boundary tilting persepective on the phenomenon of adversarial
  examples.
\newblock {\em arXiv preprint arXiv:1608.07690}, 2016.

\bibitem{tsipras2018robustness}
D.~Tsipras, S.~Santurkar, L.~Engstrom, A.~Turner, and A.~Madry.
\newblock Robustness may be at odds with accuracy.
\newblock {\em arXiv preprint arXiv:1805.12152}, 2018.

\bibitem{uesato2018adversarial}
J.~Uesato, B.~O'Donoghue, A.~v.~d. Oord, and P.~Kohli.
\newblock Adversarial risk and the dangers of evaluating against weak attacks.
\newblock In {\em International Conference on Machine Learning (ICML)}, 2018.

\bibitem{zagoruyko2016wide}
S.~Zagoruyko and N.~Komodakis.
\newblock Wide residual networks.
\newblock {\em arXiv preprint arXiv:1605.07146}, 2016.

\end{thebibliography}
}
\onecolumn
\appendix
\section{Supplementary material}

\subsection{Results of applying CURE on the SVHN dataset}
We fine-tune a pre-trained ResNet-18 using our method, CURE, on SVHN dataset. The learning rate is varying between $[10^{-4},10^{-6}]$ for a duration of 20 epochs. The value of $\gamma$ is set to $4$, $8$, and $12$ for 10, 5, and 5 epochs respectively. Also, for SVHN, we fix $h=1.25$.

\begin{table}[ht]
    \centering
    \begin{tabular}{lcc}
        \toprule
        & \multicolumn{2}{c}{ResNet-18}\\
        \cmidrule{2-3}
         & Clean & Adversarial\\
        \midrule
        Normal training & $96.3\%$ & $0.9\%$\\
        CURE & $91.1\%$ & $28.4\%$\\
        Adv. training (reported in ~\cite{buckman2018thermometer}) & $93\%$ & $33\%$\\
        \bottomrule
    \end{tabular}
    \caption{Adversarial and clean accuracy for SVHN for original, regularized and adversarially trained models. Performance is reported for a ResNet-18 model, and the perturbations are computed using PGD(10) with $\epsilon=12$.}
    \label{tab:svhn_regularization_result}
\end{table}

\subsection{Curvature profile of CURE}
\begin{figure}[h]
    \centering
    \includegraphics[width=0.5\linewidth]{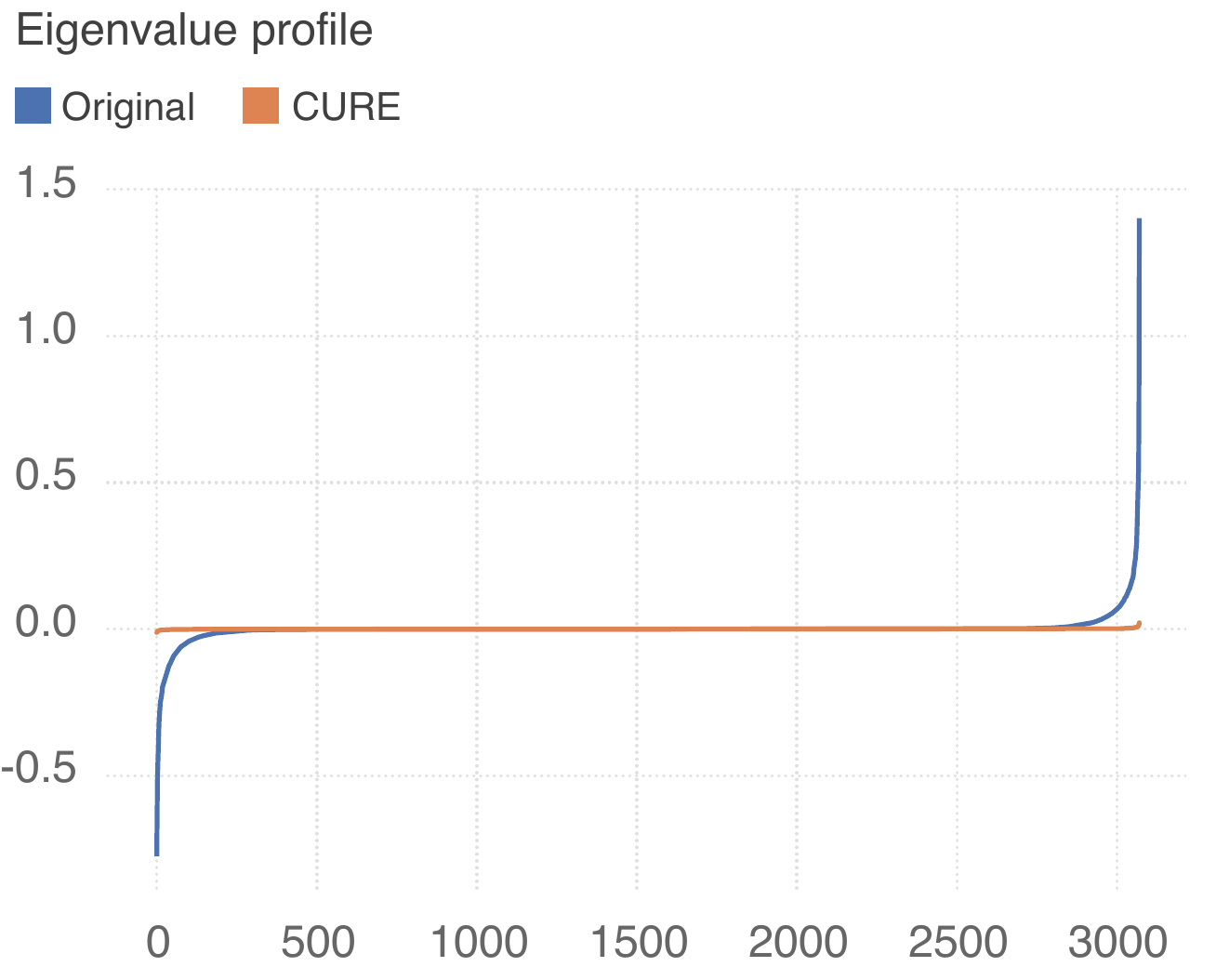}
    \caption{Curvature profiles for a ResNet-18 model trained on SVHN and its fine-tuned version using CURE.}
    \label{fig:curvature_cure_svhn}
\end{figure}
\pagebreak
\subsection{Loss surface visualization}

\begin{figure}[h]
    \centering
    \begin{subfigure}[]{0.20\textwidth}
        \includegraphics[width=\linewidth]{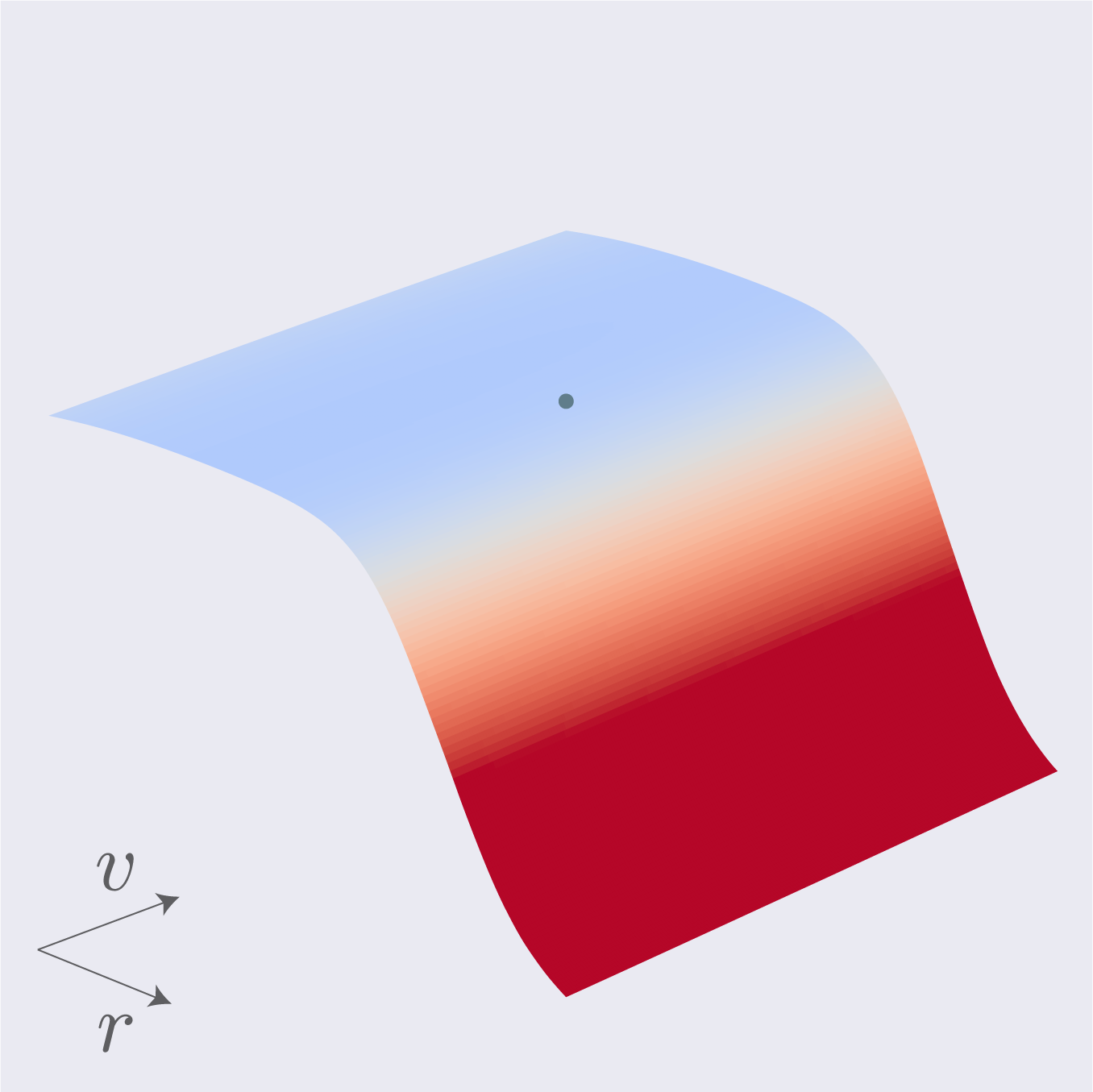}
        \caption{CURE}
    \end{subfigure}~
    \begin{subfigure}[]{0.20\textwidth}
        \includegraphics[width=\linewidth]{svhn_normal_loss_surface.png}
        \caption{Original network}
    \end{subfigure}
    \caption{Illustration of the negative of the loss surface of the original and the fine-tuned networks trained on SVHN; i.e., $-\ell(s)$ for points $s$ belonging to a plane spanned by a normal direction $r$ to the decision boundary, and random direction $v$. The original sample is illustrated with a blue dot. The light blue part of the surface corresponds to low loss (i.e., corresponding to the classification region of the sample), and the red part corresponds to the high loss (i.e., adversarial region).}
    \label{fig:loss_surfaces_flattening_svhn}
\end{figure}

\subsection{Evolution of curvature and robustness}
\begin{figure}[h]
    \centering
    \includegraphics[width=0.8\linewidth]{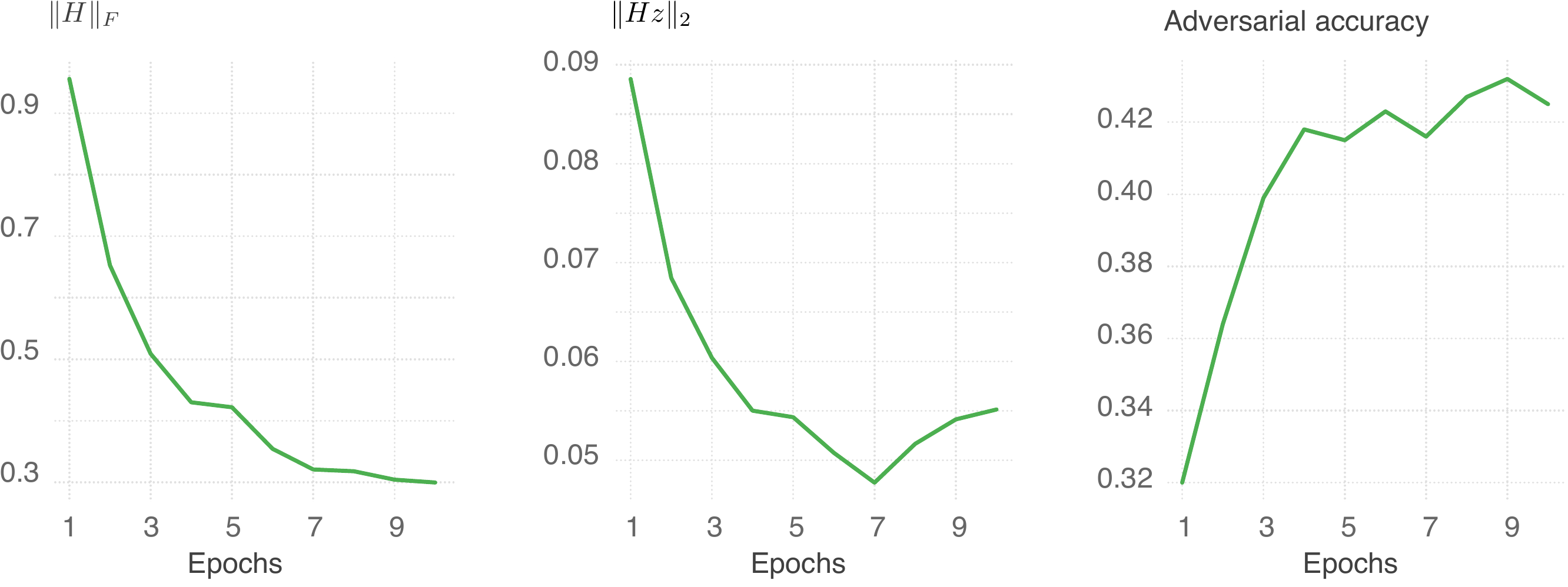}
    \caption{Evolution throughout the course of our CURE fine-tuning for a ResNet-18 on SVHN. The curves are averaged over $1000$ datapoints. \textbf{Left: } estimate of Frobenius norm , \textbf{Middle: } $\| H z \|$, where $z=\text{sign} (\nabla \ell(x))=\|\text{sign}(\nabla \ell(x))\|_2$, and \textbf{Right:} adversarial accuracy computed using PGD(10) with $\epsilon=8$. The Frobenius norm is estimated with $\| H \|_F^2 = \mathbb{E}_{z \sim \mathcal{N}(0,I)} \| H z \|^2$, where the expectation is approximated with an empirical expectation over $100$ samples $z_i \sim \mathcal{N}(0, I)$.}
\end{figure}

\pagebreak
\subsection{Adversarial accuracy}
\begin{figure}[h]
    \centering
    \includegraphics[width=0.5\columnwidth]{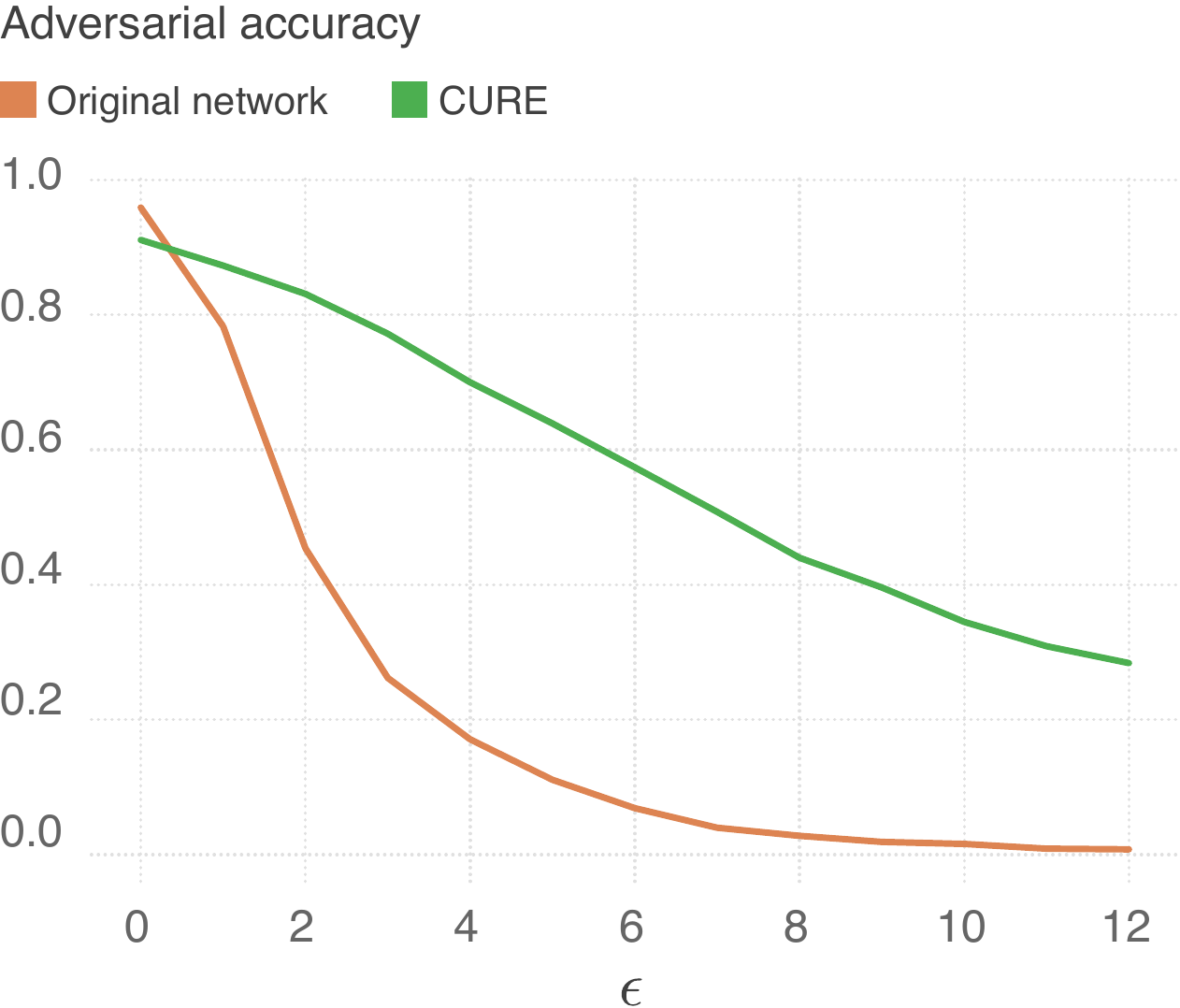}
    \caption{Adversarial accuracy versus perturbation magnitude $\epsilon$ computed using PGD(10), for ResNet-18 trained with CURE on SVHN. Curve generated for 2000 random test points.}
    \label{fig:svhn_robustness_vs_eps}
\end{figure}
\end{document}